\newcommand\bcmdtab{\noindent\bgroup\tabcolsep=0pt%
  \begin{tabular}{@{}p{10pc}@{}p{20pc}@{}}}
\newcommand\ecmdtab{\end{tabular}\egroup}
\newcommand\code[1]{\texttt{#1}}
\newtheorem{dummylemma}{BLA}
\newtheorem{example}{Example}
\newcommand{\ignore}[1] {}
\DeclareTextFontCommand{\verdana}{\fontencoding{T1}\fontfamily{verdana}\selectfont}
\newlength\Rulewid
\newtheorem{DefinitioN}[dummylemma]{\em Definition}
\newenvironment{definition-1}[1]
                {\begin{DefinitioN}[#1]\rm}{\hfill $\Box$\end{DefinitioN}}
\newcommand{\cA}{{\cal A}}
\newcommand{\cB}{{\cal B}}
\newcommand{\cF}{{\cal F}}
\newcommand{\cP}{{\cal P}}
\newcommand{\cS}{{\cal S}}
\newcommand{\cU}{{\cal U}}
\newcommand{\cV}{{\cal V}}
\newcounter{todocounter}
\newtheorem{definition-a}{Definition} 
\newtheorem{example-a}{Example} 
\newtheorem{theorem-a}{Theorem} 
\newtheorem{lemma-a}{Lemma} 
  \title[Paraconsistency and Word Puzzles]
        {Paraconsistency and Word Puzzles}
  \author[Tiantian et al.]
         {TIANTIAN GAO, PAUL FODOR and MICHAEL KIFER \\
         Stony Brook University, New York, USA\\
         \email{\{tiagao, pfodor, kifer\}@cs.stonybrook.edu}
         }
\newtheorem{theorem}{Theorem}
\newtheorem{lemma}[theorem]{Lemma}
\theoremstyle{definition}
\newtheorem{definition}{Definition}
\newcounter{principlecounter}
\newenvironment{principle}[1]{
  \refstepcounter{principlecounter}
  \noindent
  \textbf{Principle \arabic{principlecounter}:\ }
  \textbf{#1}\par
  
}
\newcommand{\epiarrow}{\ensuremath{<\!\!\sim}}
\newcommand{\epimodel}{\ensuremath{|\!\!\!\approx}}
\newcommand{\epinotmodel}{\ensuremath{|\!\!\!\not\approx}}
\newcommand{\epicmodel}{\ensuremath{|\!\!\!\approx_{\cS}}}
\newcommand{\epinotcmodel}{\ensuremath{|\!\!\!\not\approx_{\cS}}}
\newcommand{\TV}[1]{\textnormal{\textbf{#1}}}
\begin{document}

\label{firstpage}

\maketitle

\begin{abstract}
Word puzzles and the problem of their representations in logic languages have
received considerable attention in the last decade \cite{PonnuruFMT04,Shapiro11,BaralD12,schwitter2013jobs}.  
Of special interest is the problem of generating such representations
directly from
natural language (NL) or controlled natural language (CNL).  An interesting
variation of this problem, and to the best of our knowledge, scarcely
explored variation in this context, is when the input information is
inconsistent. In such situations, the existing encodings of word puzzles
produce inconsistent representations and break down. In this paper, we
bring the well-known type of paraconsistent logics, called 
\emph{Annotated Predicate Calculus} (APC) \cite{kifer1992logic}, 
to bear on the problem.  We introduce a new kind of non-monotonic semantics for APC, called 
\emph{consistency preferred stable models} and argue that 
it makes APC into a suitable platform for
dealing with inconsistency in word puzzles and, more generally, in NL
sentences. We also devise a number of general principles to help the user
choose among the different representations of NL sentences, which might
seem equivalent but, in fact, behave differently when inconsistent
information is taken into account. These principles can be incorporated
into existing CNL translators, such as 
Attempto Controlled English (ACE) \cite{fuchs2008attempto} and 
PENG Light \cite{white2009update}.  
Finally, we show that APC with the consistency preferred stable model semantics can be equivalently embedded in ASP with preferences over stable models, and we use this embedding to implement this version of APC in Clingo \cite{gekakaosscsc11a} and its Asprin add-on \cite{brewka2015asprin}. 

To appear in Theory and Practice of Logic Programming (TPLP).
\end{abstract}

 \begin{keywords}
   Paraconsistency, Word Puzzles, Annotated Predicate Calculus, Controlled Natural Language
 \end{keywords}


\section{Introduction}

The problem of logical representation for word puzzles has recently received
considerable attention
\cite{PonnuruFMT04,Shapiro11,BaralD12,schwitter2013jobs}.
In all of these studies, however, the input information is assumed to be
consistent and the proposed logical representations break on
inconsistent input.
The present paper proposes an approach that works in
the presence of inconsistency and not just for word puzzles.

At first sight, one might think that the mere use of a paraconsistent
logic such as Belanp's four valued logic \cite{belnap1977useful} or
Annotated Logic Programming \cite{subrahmanian-tcs-89,kifer1992theory}
is all what is needed to address the problem, but it is not so.
We do start with a well-known paraconsistent logic, called
\emph{Annotated Predicate Calculus} (APC) \cite{kifer1992logic},
which is related to the aforementioned Annotated Logic Programs, but
this is not enough:  a number of issues arise in the presence of
paraconsistency and different
translations might seem equivalent but behave differently when inconsistent
information is taken into account. As it turns out, several factors can affect
the choice of the ``right'' logical representation for many
NL sentences, especially for implications.
We formalize several principles to guide the translation
of NL sentences into APC, principles that can be incorporated
into existing controlled language translators, such as 
Attempto Controlled English (ACE) \cite{fuchs2008attempto} and 
PENG Light \cite{white2009update}. We illustrate these issues with the
classical Jobs Puzzle \cite{Wos_Overbeck_Lusk_Boyle_1984} and show how
inconsistent information affects the conclusions.

To address the above problems formally, we introduce
a new kind of non-monotonic semantics for APC, which is based on
\emph{consistency-preferred stable models} and is
inspired by the concept of the
most epistemically-consistent models of \cite{kifer1992logic}. We argue
that this new semantics makes APC into a good platform for
dealing with inconsistency in word puzzles and, more generally, for
translating natural language sentences into logic.

Finally, we show that the consistency-preferred 
stable models of APC can be computed
using answer-set programming (ASP) systems that support preferences over stable models, such as
Clingo \cite{gekakaosscsc11a} with the Asprin add-on
\cite{brewka2015asprin}.

This paper is organized as follows.
Section \ref{apc_section} provides background material on APC.
In Section~\ref{apc_lp} we consider the logic programming subset of APC and
define preferential stable models for it.
In Section \ref{apc_asp}, we show that the logic programming subset of APC
(under the consistency-preferred stable model semantics)
can be encoded in ASP in semantically-preserving way.
In Section \ref{puzzle}, we discuss variations of Jobs Puzzle
\cite{Wos_Overbeck_Lusk_Boyle_1984} 
when various kinds of inconsistency are injected into the
formulation of the puzzle.
Section~\ref{puzzlePrinciples} explains that logical encoding of
common knowledge in the presence of inconsistency needs to take into
account a number of considerations that are not present when inconsistency
is not an issue.  We organize those considerations into several different
principles and illustrate their impact.
Section~\ref{sec-conclusion} concludes the paper.
Finally,
Appendix A contains
the full encoding of Jobs Puzzle in APC under
the consistency-preferred semantics.
This appendix also includes variations that inject various kinds of
inconsistency into the puzzle, and the derived conclusions are discussed.
Appendices B 
and  C 
contain similar analyses of other well-known puzzles:
Zebra Puzzle\footnote{\url{https://en.wikipedia.org/wiki/Zebra_Puzzle}}
and 
Marathon Puzzle \cite{Guer00}.
Ready-to-run encodings of these programs in Clingo/Asprin
can be found at \url{https://bitbucket.org/tiantiangao/apc_lp}.


\section{Annotated Predicate Calculus: Background and Extensions} \label{apc_section}

To make this paper self-contained, this section provides the necessary
background on APC. At the end of the section, we define new 
semantic concepts for APC, which will be employed in later sections.

The \emph{alphabet} of APC consists of countably-infinite sets of: \emph{variables}
$\cV$, \emph{function symbols} $\cF$ (each
symbol having an arity; \emph{constants} are viewed as 0-ary function symbols),
\emph{predicate symbols} $\cP$, \emph{truth annotations},
quantifiers, and logical connectives.
In \cite{kifer1992logic}, truth annotations could come from an arbitrary upper
semilattice (called ``belief semilattice'' there), but here we will use only $\perp$ (unknown),
\TV{f} (false), \TV{t} (true) and $\top$
(contradiction or inconsistency), which are partially ordered as follows:
$\perp \le \TV{f} \le \top$  and $\perp \le \TV{t} \le \top$. 
$Terms$ in APC are constructed exactly as in predicate calculus: from
constants, variables and function symbols.  A \emph{ground term}
is one that has no variables.

\begin{definition}[\bfseries Atomic formulas \cite{kifer1992logic}] 
A $predicate\ term$ has the form  $p(t_{1}, t_{2}, \ldots, t_{n})$,
where $p$ is a n-ary predicate symbol and $t_1$, $t_2$, \ldots, $t_n$ are terms.
An APC \emph{atomic formula} (or an APC predicate) has the form
$p(t_{1}, t_{2}, \ldots, t_{n}) : \TV{s}$, where $p(t_{1}, t_{2}, \ldots, t_{n})$ is a predicate term and $\TV{s}$ is annotation
indicating the degree of belief (or truth) in the predicate term.
A \emph{ground atomic formula} is an atomic formula that has no variables. 
\qed
\end{definition}

We call an atomic formula of the form $p:\TV{s}$ a \TV{t}-\textit{predicate}
(resp., an \TV{f}-,
$\top$\textit{-}, or $\perp$-predicate) if \TV{s} is 
\TV{t} (resp., \TV{f}-, $\top$\textit{-}, or $\perp$).

APC includes the usual universal and existential quantifiers,
the connectives, $\wedge$ and $\vee$, and there are two negation and
two implication connectives: the \textit{ontological negation} $\neg$ and
\emph{ontological implication}  $\leftarrow$, plus
the \textit{epistemic negation} $\sim$ and
\emph{epistemic implication} $\epiarrow$.
As will be seen later, the distinction between the ontological and the
epistemic connectives is useful because they behave
differently in the presence of inconsistency.

\begin{definition}[\bfseries APC well-formed formulas \cite{kifer1992logic}] 
An APC  \emph{well-formed\ formula} is defined inductively as follows:
\begin{enumerate}[leftmargin=1cm]
\item[--] an atomic formula $p(t_{1}, t_{2}, \ldots, t_{n})$ : \TV{s}
\item[--] if $\phi$ and $\psi$ are well-formed formulas, then so are 
$\sim\phi$,
$\neg \phi$,
$\phi \wedge \psi$,
$\phi \vee \psi$,
$\phi \leftarrow \psi$, and 
$\phi \epiarrow \psi$.
\item[--] if $\phi$ is a formula and $X$ is a variable, then ($\forall X \phi$) and ($\exists X \phi$) are formulas.
\qed
\end{enumerate}
\end{definition}


An APC \textit{literal} is either a predicate $p : \TV{s}$ or an ontologically negated predicate $\neg p : \TV{s}$. An \textit{epistemic\ literal} is either a predicate $p : \TV{s}$ or an epistemically negated predicate $\sim p : \TV{s}$.

In \cite{kifer1992logic}, the semantics was defined with respect to general
models, but here we will be dealing with logic programs and the
Herbrand semantics will be more handy.

\begin{definition}[\bfseries APC Herbrand universe, base, and interpretations]
The \textit{Herbrand\ universe} $\cU$  for APC is the set of all ground terms. 
The \textit{Herbrand\ base} $\cB$ for APC  is the set of all ground APC atomic
formulas.
An \textit{Herbrand\ interpretation} $I$ for APC  is a non-empty subset of the
Herbrand base that is \emph{closed} with respect to the following
operations: 
\begin{enumerate}[leftmargin=1cm]
\item[--] if $p : \TV{s}\in I$, then also $p : \TV{s}' \in I$ for all
  $\TV{s}' \leq \TV{s}$; and
\item[--] if $p : \mathbf{s}_{1},\,p : \mathbf{s}_{2}\in I$,
  and $\TV{s}\ =\ lub_{\leq}(\mathbf{s}_{1},\mathbf{s}_{2})$
  then $p : \TV{s}\in I$.
\end{enumerate} 
The annotations used in APC form a lattice (in our case a 4-element lattice) with the order $\leq$
and with $lub_\leq$ used as the least upper bound operator of that
lattice.

\noindent
We will also use
$\cB_{\top}$ to denote the subset of all $\top$-\textit{predicates} in $\cB$. \qed
\end{definition}

As usual, a {\em variable assignment} is a mapping $\nu:\,\cV ~ \rightarrow
~ \cU$ that takes a variable and returns a ground term.
This mapping is extended to terms as follows:
$\nu(f(t_1,\,\ldots\,,\, t_n)) = f(\nu(t_1) ,\,\ldots\,,\, \allowbreak \nu(t_n))$.
We will disregard variable assignments for formulas with no free
variables (called \textit{sentences}) since they do not affect ground formulas.

\begin{definition}[\bf APC Herbrand Models] \label{def-truth-val-2}
Let $I$ be an APC Herbrand interpretation and $\nu$ be a variable assignment.  
For an atomic formula
$p(t_1, t_2,\ldots,t_n) : \TV{s}$, we write $I \models_\nu p(t_1, t_2,\ldots,t_n) : \TV{s}$ if and only if 
$p(\nu(t_1), \nu(t_2),\ldots,\nu(t_n)) : \TV{s} \in I$. For well-formed formulas $\phi$ and $\psi$, we write:
\begin{enumerate}[leftmargin=1cm]
    \item[--] $I\models_\nu \phi \wedge \psi$ if and only if $I\models_\nu \phi$ and $I\models_\nu \psi$;
    \item[--] $I\models_\nu \phi \vee \psi$ if and only if $I\models_\nu \phi$ or $I\models_\nu \psi$;
    \item[--] $I\models_\nu \neg \phi$ if and only if not $I\models_\nu \phi$;
    \item[--] $I\models_\nu (\forall X)\phi$ if and only if $I\models_{\nu'} \phi$, for every assignment $\nu'$ that differs from $\nu$ only in its $X$-value;
    \item[--] $I\models_\nu (\exists X)\phi$ if and only if $I\models_{\nu'} \phi$, for some $\nu'$ that differs from $\nu$ only in its $X$-value;
   \item[--] $I\models_\nu \psi \leftarrow \phi$ if and only if $I\models_\nu   \neg \phi \vee \psi$;
   \item[--] $I\models_\nu \sim p : \TV{s}$ if and only if $I\models_\nu p : \sim \TV{s}$, 
where 
 $\sim\TV{t}\ =\ \TV{f}$, $\sim\TV{f}\ =\ \TV{t}$, $\sim \top\ =\ \top$ and $\sim\perp\ =\ \perp$;
\end{enumerate}
We also define:~
     $\sim \neg \phi \,\equiv\, \neg \sim \phi$,~
    $\sim (\phi \wedge \psi) \,\equiv\, \sim \phi \vee \sim \psi$,~
    $\sim (\phi \vee \psi) \,\equiv\,  \sim \phi \wedge \sim \psi$,~
    \linebreak[4]
   $\sim \forall X \phi \,\equiv\, \exists X \sim \phi$,~
   $\sim \exists X \phi \,\equiv\, \forall X \sim \phi$,~ and~
   $\psi \epiarrow \phi \,\equiv\, \sim \phi \vee \psi$.

A formula $\phi$ is \emph{satisfied} by $I$ if and only if $I \models_\nu \phi$
for every valuation $\nu$. In this case we write simply $I \models \phi$.
$I$ is a \emph{model} of a set of formulas $P$ if and only if every formula $\phi \in P$ is satisfied in $I$.
A set of formulas $P$ logically entails a formula $\psi$, denoted $P \models \psi$, if and only if every model of $P$ is also a model of $\psi$.
\qed
\end{definition}

APC has
two types of logical entailment: \textit{ontological} and
\textit{epistemic}. Ontological entailment is the entailment $\models$,
which we have just defined.
Before defining the \emph{epistemic} entailment, we motivate it with a number
of examples. To avoid clutter, in all examples we will only show the highest annotation for each APC predicate. For instance, if a model contains $p : \top$, then we will not show $p : \TV{t}$, $p : \TV{f}$, or $p : \perp$.

\begin{example} \label{ont_modus_ponens}
Consider the following set of APC formulas 
$P\ =\ \{
 q : \TV{t} \leftarrow p : \TV{t}, ~~
 p : \TV{t}\}$. 
It has four models: 
$m_{1}\ =\ \{p : \TV{t}, ~~ q : \TV{t}\}$, 
$m_{2}\ =\ \{p : \top, ~~ q : \TV{t}\}$, 
$m_{3}\ =\ \{p : \TV{t}, ~~ q : \top\}$ and 
$m_{4}\ =\ \{p : \top, ~~ q : \top\}$. 
Thus, $P\models q : \TV{t}$ holds (since $q : \TV{t}$ occurs in every model of $P$).
\qed
\end{example}

\begin{example} \label{ont_top}
The APC set of formulas
$P\ =\ \{
 q : \TV{t} \leftarrow p : \TV{t}, ~~
 p : \top\}$ 
has two models:
$m_{1}\ =\ \{p : \top, ~~ q : \TV{t}\}$ and 
$m_{2}\ =\ \{p : \top, ~~ q : \top\}$. 
Therefore, $P\models q : \TV{t}$ holds. \qed
\end{example}

\begin{example} \label{epi_modus_ponens}
This set of formulas
$P\ =\ \{
q : \TV{t} \epiarrow p : \TV{t}, ~~
p : \TV{t}\}$ is similar to that in Example~\ref{ont_modus_ponens}
except that it uses epistemic implication instead of the ontological one.
One of the models of that set is $m\ =\ \{p : \top, ~~ q : \perp\}$ and therefore $P\not\models q : \TV{t}$.
\qed
\end{example} 

Examples \ref{ont_modus_ponens} and \ref{ont_top} show that
\textit{ontological\ implication} has the modus ponens property, but it may
be too strong, as it allows one to draw conclusions from inconsistent
information. 
\textit{Epistemic\ implication} of Example \ref{epi_modus_ponens}, on the
other hand, is too cautious and does \emph{not} have the modus ponens property.
However, epistemic implication \emph{does} have the modus ponens
property \emph{and} it blocks drawing conclusions from inconsistency under
the \emph{epistemic entailment}, defined next.

\begin{definition}[\bfseries Most \textit{e}-consistent models \cite{kifer1992logic}]
A Herbrand interpretation $I_{1}$ is $more$ (or equally) \textit{e-consistent} than another interpretation $I_{2}$ (denoted $I_{1} \le_{\top} I_{2}$) if and only if 
$I_{1}\models p : \top$ 
implies
$I_{2}\models p : \top$ for every ground predicate term $p$.

A model $I$ of a set of formulas $P$ is a \textit{most\ e-consistent\
  model}, if there is no other model of $P$ that is strictly more
e-consistent than $I$.

A program $P$ \emph{epistemically} entails a formula $\psi$, denoted $P
\epimodel \psi$, if and only if every most e-consistent model of $P$
is also a model of $\psi$.
\qed
\end{definition}

Going back to Example \ref{epi_modus_ponens}, it has only one
most e-consistent model $m=\{p : \TV{t}, q :
\TV{t}\}$, so $P\epimodel q :
\TV{t}$ holds. The next example shows that $\epiarrow$ does \emph{not}
propagate inconsistency to conclusions. 

\begin{example} \label{epi_non_prop_inc}
Let $P\ =\ \{q : \TV{t} \epiarrow p :
\TV{t}, ~~p : \top \}$. Observe that $P$ has a most e-consistent model
$m\ =\ \{p : \top,~~ q : \perp\}$,
in which $q: \TV{t}$ does not hold. Therefore, $P\epinotmodel q : \TV{t}$ holds.\qed
\end{example} 

Next we observe that not all inconsistent information is created equal, as
people have different degrees of confidence in different pieces of
information. For instance, one normally would have higher
confidence in the fact that someone named Robin is a person than in
the fact that Robin is a male. Therefore, given a choice,
we would hold it less
likely that $person(robin)$ is inconsistent than that
$male(robin)$ is. 
Likewise, in the following example,
given a choice, we are more likely to hold to a belief that
Pete is a person than to a belief that he is rich. 

\begin{example}\label{cpr}
Consider the following formulas
\begin{enumerate}[leftmargin=1cm]
\item[] $person(pete) : \TV{t}.$ 
\item[] $businessman(pete) : \TV{t}.$
\item[]  $rich(pete) : \TV{f}.$
\item[] $rich(X) : \TV{t} \epiarrow person(X) : \TV{t} ~\wedge~ businessman(X) : \TV{t}.$
\end{enumerate}
There are three \textit{most\ e-consistent\ models}:
\begin{enumerate}[leftmargin=1cm]
\item[] $m_{1}\ =\ \{person(pete) : \top, ~~ businessman(pete) : \TV{t}, ~~ rich\allowbreak(pete) : \TV{f}\}$
\item[] $m_{2}\ =\ \{person(pete) : \TV{t}, ~~ businessman(pete) : \top, ~~ rich(pete) : \TV{f}\}$
\item[] $m_{3}\ =\ \{\allowbreak person(pete) : \TV{t}, ~~ businessman(pete) : \TV{t}, ~~ rich(pete) : \top\}$
\end{enumerate}
Based on the aforesaid confidence considerations, we are more likely to
believe that Pete is a person than that he is a
businessman or rich. Therefore, 
we are likely to think that the models $m_{2}$ and $m_{3}$ are better
descriptions of the real world than $m_{1}$.\qed
\end{example}

In this paper, we capture the above intuition by extending the notion of
most e-consistent models with additional preferences over models.

\begin{definition}[\bf Consistency-preference relation and consistency-preferred models]\label{def_mcpm}

A \textit{consistency preference} $S$ over interpretations,
where $S$ is a set of ground $\top$-predicates in APC, is defined as follows:

\begin{itemize}[leftmargin=1cm]
\item[--] An interpretation $I_{1}$ is \emph{consistency-preferred} over $I_{2}$ with
respect to $S$, denoted $I_{1}<_{S}I_{2}$, if and only if 
$S\cap I_1 ~\subset~ S\cap I_2$.

\item[--] Interpretation $I_{1}$ and $I_2$ are \textit{consistency-equal} with
respect to $S$, denoted $I_{1}=_{S}I_{2}$, if and only if 
$S\cap I_1 ~=~ S\cap I_2$.
\end{itemize}

A \textit{consistency-preference relation} $<_\cS$, where $\cS =
(S_{1},S_{2},\ldots,S_{n})$ is a sequence of sets of ground
$\top$-predicates, is defined as a lexicographic order composed out of the
sequence of consistency preferences $S_1, S_2, \ldots, S_{n}$. Namely, $I_{1}
<_\cS I_2$ if and only iff
there is $1 \leq i \leq n$ such that
$(\bigwedge_{1\leq j < i} (I_{1}=_{S_{j}} I_{2}))$ and
$I_{1}<_{S_{i}}I_{2}$.

A model $I$ of a set of formulas $P$ is called (most)
\emph{consistency-preferred} with respect to $<_\cS$ if $P$ has no other model
$J$ such that $J <_\cS I$.

We will always assume that $S_n = \cB_\top$ --- the set of all ground
$\top$-predicates and, therefore, any most consistency-preferred model is
also a most e-consistent one.

We use the notation $\epicmodel$ to denote \emph{epistemic entailment} with respect to most consistency-preferred models. A program $P$ \emph{epistemically entails} a formula $\psi$ with respect to a 
consistency-preference relation $<_\cS$, denoted 
$P \epicmodel \psi$, if and only if every most consistency-preferred model of $P$
is also a model of $\psi$.

\qed
\end{definition}


\section{Logic Programming Subset of APC and Its Stable Models Semantics} \label{apc_lp}

In this section, we define the logic programming subset of APC, denoted
$APC_{LP}$, and give it
a new kind of semantics based on \emph{consistency-preferred stable models}.
\begin{definition}
An $APC_{LP}$ \emph{program} consists of rules of the form:
\begin{equation*}
l_{0} \vee \cdots \vee l_{m} \leftarrow l_{m+1} \wedge \cdots \wedge l_{n} \wedge \neg \ l_{n+1}\wedge \cdots \wedge \neg \ l_{k}.
\end{equation*}
where each $l_{i}$ is an epistemic literal. Variables are
assumed to be
implicitly universally quantified. An $APC_{LP}$ \emph{formula} is either a singleton epistemic literal, 
or a conjunction of epistemic literals, 
or a disjunction of them.
\qed
\end{definition}

The formula $l_{0} \vee \cdots \vee l_{m}$
is called the $head$ of the rule, and
$l_{m+1} \wedge \cdots \wedge l_{n} \wedge \neg \ l_{n+1}\wedge \cdots \wedge \neg \ l_{k}$
is the $body$ of that rule.

Recall from Section 2 that epistemic negation can be pushed inside and
eliminated via this law: $\sim p : \alpha\
\equiv\ p : \sim \alpha$, where $\sim\TV{t}\ =\ \TV{f}$,
$\sim\TV{f}\ =\ \TV{t}$, $\sim \top\ =\ \top$, and $\sim\perp\ =\
\perp$ so, for brevity, we assume that all $APC_{LP}$ programs
are transformed in this way and the epistemic negation is eliminated.

When the rule body is empty, the ontological implication symbol
$\leftarrow$ is usually omitted and the rule becomes a disjunction.  Such a
disjunction can also be represented as an epistemic implication
and sometimes this representation may be closer to a normal English
sentence.  For instance, the sentence, ``If a person is a businessman then
that person is rich,'' can be represented as an epistemic implication:
   $rich(X) : \TV{t} ~\epiarrow~ person(X) : \TV{t}~\wedge~ businessman(X) : \TV{t}$,
which
is easier to read than the
equivalent disjunction $rich(X) : \TV{t}~\vee~
person(X) : \TV{f}~\vee~ businessman(X) : \TV{f}$.

The notion of stable models for $APC_{LP}$ carries over from standard
answer set programming (ASP) with very few changes.

\begin{definition}[\bfseries The Gelfond-Lifschitz reduct for $APC_{LP}$] Let $P$ be an $APC_{LP}$
program and $M$ be a Herbrand interpretation. The reduct of $P$ w.r.t. $M$, denoted $\frac{P}{M}$, is a program free from ontological negation obtained by
\begin{enumerate}[leftmargin=1cm]
\item removing rules with $\neg p : \TV{s}$ in the body, where $M\models p : \TV{s}$; and
\item removing literals $\neg p : \TV{s}$ from all remaining rules.
\qed
\end{enumerate}
\end{definition}

\begin{definition}[\bfseries Stable\ models for $APC_{LP}$] 
A Herbrand interpretation $M$ is a $stable\ model$ of an $APC_{LP}$ program $P$ if $M$ is a minimal 
model of $\frac{P}{M}$. Here, minimality is with respect to set inclusion.
\qed
\end{definition}

\begin{definition}[\bfseries Consistency-preferred stable models  for $APC_{LP}$]
Let $<_\cS$ be a consistency-preference relation of
Definition~\ref{def_mcpm}, where
$\cS = (S_{1},S_{2},\ldots,S_{n})$ is a sequence of sets of ground
$\top$-predicates.
An $APC_{LP}$ interpretation $M$ is a (most) \textit{consistency-preferred
  stable model} of an $APC_{LP}$ program $P$ if and only if:
\begin{enumerate}[leftmargin=1cm]
\item[--] $M$ is a stable model of $P$, and
\item[--] $M$ is a most consistency-preferred model with respect to $<_\cS$.
\end{enumerate}
\end{definition}


\section{Embedding \texorpdfstring{$APC_{LP}$}{APCLP} into ASP} \label{apc_asp}

We now show that
$APC_{LP}$ can be isomorphically embedded in ASP extended with a model preference
framework, such as 
the Clingo system \cite{gekakaosscsc11a} with its Asprin extension
\cite{brewka2015asprin}. We then prove the correctness of this embedding, i.e.,
that it is one-to-one and preserves the semantics.
Next, we define the subset of ASP onto which $APC_{LP}$ maps.

\begin{definition}
$ASP_{\emph{truth}}$ is a subset of \emph{ASP} programs where the only predicate is 
\code{truth/2},  which is used to reify the APC predicate terms and
associate them with truth values. That is, these atoms have the form
$\emph{truth}(p,\TV{s})$, where the first argument  is 
the reification of an APC predicate term and the second argument
is one of these truth annotations: \code{t}, \code{f}, \code{top}, or
\code{bottom}.

An $ASP_{\emph{truth}}$ \emph{program} consists a set of rules of the form: 
\begin{equation*}
t_{0} \vee \cdots \vee t_{m} \leftarrow t_{m+1} \wedge \cdots \wedge t_{n} \wedge not \ t_{n+1} \wedge \cdots \wedge not \ t_{k}.
\end{equation*}
where the $t_i$'s are \code{truth/2}-predicates.

An $ASP_{\emph{truth}}$ \emph{formula} is either a singleton \code{truth/2}-predicate, a
conjunction of such predicates, or a disjunction of them.
\qed
\end{definition}

\begin{definition}\label{def-xi-asp}
The embedding of an $APC_{LP}$ program in $ASP_{\emph{truth}}$, denoted
$\Xi_{asp}$, is defined recursively as follows (where $tv_{asp}$ is the truth value mapping):
\begin{enumerate}[leftmargin=1cm]
\item[--] $tv_{asp}(\TV{t})\ =\ $ \code{t}
\item[--] $tv_{asp}(\TV{f})\ =\ $ \code{f}
\item[--] $tv_{asp}(\top)\ =\ $ \code{top}
\item[--] $tv_{asp}(\perp)\ =\ $ \code{bottom}
\item[--] $\Xi_{asp}(p : \TV{s})\ =\ $ \code{truth(p,}$tv_{asp}(\TV{s})$\code{)} 
\item[--] $\Xi_{asp}(\neg L)\ =\ \code{not\
    }\Xi_{asp}(L)$, where $L$ is an APC predicate
\item[--] $\Xi_{asp}(L\vee \phi)\ =\ \Xi_{asp}(L) \vee \Xi_{asp}(\phi)$,
  where $L$ is an APC predicate and $\phi$ is a disjunction of APC predicates
\item[--] $\Xi_{asp}(L \wedge \phi)\ =\ \Xi_{asp}(L) \wedge \Xi_{asp}(\phi)$, where $L$ is an APC literal and $\phi$ is a conjunction of APC literals
\item[--] $\Xi_{asp}(head\leftarrow body)\ =\ \Xi_{asp}(head)\leftarrow \Xi_{asp}(body)$, where $head$ (resp., $body$) denotes the head (resp., the body) of a rule.
\end{enumerate}

The embedding $\Xi_{asp}$ also applies to APC Herbrand\ interpretations:
each APC Herbrand interpretation (which is a set of APC atoms of the form
$p : \TV{s}$) is mapped to a set of $ASP_{\emph{truth}}$ atoms (of the form
\code{truth(p,}$tv_{asp}(\TV{s})$\code{)} ). 
\qed
\end{definition}

We require that each $ASP_{\emph{truth}}$ program includes the following
background axioms to match the semantics of APC:
{\tt
\begin{enumerate}
    \item[] truth(X,top) :- truth(X,t),truth(X,f).
    \item[] truth(X,t) :- truth(X,top).
    \item[] truth(X,f) :- truth(X,top).
    \item[] truth(X,bottom).
\end{enumerate}
}

\begin{lemma}\label{lem-one-one}
The embedding $\Xi{asp} : APC_{LP}\rightarrow ASP_{truth}$ is a one-to-one correspondence.
\qed
\end{lemma}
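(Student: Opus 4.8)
The plan is to verify directly that the embedding map $\Xi_{asp}$ is both injective and surjective as a map from $APC_{LP}$ programs (more basically, from the syntactic objects: literals, formulas, rules) onto $ASP_{truth}$. The natural strategy is to exhibit an explicit inverse. First I would observe that the truth-value component $tv_{asp} : \{\TV{t},\TV{f},\top,\perp\} \to \{\code{t},\code{f},\code{top},\code{bottom}\}$ is a bijection on the four-element annotation set, so it has an obvious inverse $tv_{asp}^{-1}$. The reification of predicate terms $p \mapsto \code{p}$ is likewise assumed to be a fixed bijective encoding of APC predicate terms as first arguments of \code{truth/2}; I would make this assumption explicit and use its inverse. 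From these two component inverses I would define a candidate inverse map $\Xi_{asp}^{-1}$ on atoms by $\Xi_{asp}^{-1}(\code{truth(p,}c\code{)}) = \mathit{term}^{-1}(\code{p}) : tv_{asp}^{-1}(c)$, and extend it recursively over \code{not}, $\vee$, $\wedge$, and $\leftarrow$ by mirroring the recursive clauses of Definition~\ref{def-xi-asp}.

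The key steps are then the two round-trip identities. I would prove by structural induction on the grammar of $APC_{LP}$ formulas and rules that $\Xi_{asp}^{-1}(\Xi_{asp}(\phi)) = \phi$ for every literal, conjunction, disjunction, and rule $\phi$; the base case is the atom clause, which reduces to $tv_{asp}^{-1}\circ tv_{asp} = \mathit{id}$ and the bijectivity of the term reification, and the inductive cases follow immediately because each recursive clause of $\Xi_{asp}$ is applied to syntactically determined subparts (a leading literal $L$ and a tail $\phi$), so the decomposition inverts cleanly. This gives injectivity. Symmetrically, I would prove $\Xi_{asp}(\Xi_{asp}^{-1}(t)) = t$ for every $ASP_{truth}$ atom, conjunction, disjunction, and rule $t$, again by induction on structure, using that every $ASP_{truth}$ predicate has the form \code{truth(p,}$c$\code{)} with $c$ one of the four constants. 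This gives surjectivity. Together the two identities show $\Xi_{asp}$ is a one-to-one correspondence.

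I expect the main obstacle to be not the inductive bookkeeping but pinning down precisely \emph{which} syntactic objects are in the domain and codomain so that the recursion is well-defined and the decompositions are unambiguous. In particular, the clauses for $\vee$ and $\wedge$ in Definition~\ref{def-xi-asp} implicitly fix a canonical parenthesization (a head literal $L$ followed by a tail), and one must check that $ASP_{truth}$ formulas admit the same unique decomposition so that $\Xi_{asp}^{-1}$ is well-defined on connectives; I would note that this holds because $\Xi_{asp}$ maps literals to single \code{truth/2}-atoms and preserves the connective structure, so the image of an $APC_{LP}$ formula has exactly the shape demanded by the $ASP_{truth}$ grammar and no collisions occur. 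A secondary subtlety is that the four background axioms required of every $ASP_{truth}$ program are fixed and identical across the image, so they do not interfere with the one-to-one correspondence at the level of the embedded rules; I would remark that the correspondence is asserted for the embedded program and the shared axioms are a constant additive part on the $ASP_{truth}$ side. Apart from these points of careful definition, the argument is a routine structural induction with no deep step.
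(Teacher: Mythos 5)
Your proposal is correct and follows essentially the same route as the paper's proof: the paper likewise establishes injectivity on literals, extends it by induction on the number of conjuncts and disjuncts, and obtains surjectivity by defining the inverse of $\Xi_{asp}$ by reversing the equations of Definition~\ref{def-xi-asp}. The only point worth adding is the paper's opening remark that one may restrict attention to $\sim$-free programs (epistemic negation having been eliminated by the convention of Section~\ref{apc_lp}), which your argument implicitly assumes; otherwise your extra care about unique decomposition and the shared background axioms just makes explicit what the paper leaves as ``straightforward.''
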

\begin{proof}
As mentioned, we can limit our attention to $\sim$-free programs. First, it is obvious that $\Xi_{asp}$ is injective on APC literals. Injectivity on APC conjunctions and disjunctions can be shown by a straightforward induction on the number of conjuncts and disjuncts. Surjectivity follows similarly because it is straightforward to define the inverse of $\Xi_{asp}$ by reversing the equations of Definition~\ref{def-xi-asp}.
\end{proof}

Next, we show the above APC-to-ASP embedding preserves 
models,
Gelfond-Lifshitz reduct,
stable models,
and also consistency preference relations.

\begin{lemma}\label{lem-xi-models}
  The models of any $ASP_{truth}$ program are closed with respect to
  $lub_{\leq}$ and downward-closed with respect to the $\leq$-ordering.
  Also, $I$ is a model of an $APC_{LP}$ program $P$ if and only if 
  $\Xi_{asp}(I)$ is a model of $\Xi_{asp}(P)$.
\end{lemma}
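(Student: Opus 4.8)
The plan is to treat the two assertions separately. For the closure claim I would start from the fact that, by definition, every $ASP_{truth}$ program contains the four background axioms, so any model $M$ must satisfy them; reading each axiom back through the truth-value mapping $tv_{asp}$ recovers exactly the closure conditions imposed on APC Herbrand interpretations. Concretely, the fact \code{truth(X,bottom)} forces the $\perp$-atom of every predicate term into $M$, which is downward closure to $\perp$; the rules \code{truth(X,t):-truth(X,top)} and \code{truth(X,f):-truth(X,top)} give downward closure from $\top$ to \TV{t} and \TV{f}; and \code{truth(X,top):-truth(X,t),truth(X,f)} is precisely the single nontrivial $lub_{\leq}$-closure instance $lub_{\leq}(\TV{t},\TV{f})=\top$. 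Every remaining instance of the two closure conditions — lubs that involve $\perp$ or $\top$, and the downward steps from \TV{t} or \TV{f} to $\perp$ — holds automatically in the four-element lattice once $\perp$ is always present, so these four axioms are not only necessary but sufficient for $M$ to be $lub_{\leq}$-closed and $\leq$-downward-closed.

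For the model-equivalence I would argue by structural induction on APC formulas, carrying a variable assignment $\nu$ along and noting once that $\Xi_{asp}$ commutes with grounding, so the implicit universal quantification of rule variables is handled uniformly. The base case is the atom $p:\TV{s}$: by Definition~\ref{def-truth-val-2}, $I\models_\nu p:\TV{s}$ iff the corresponding ground atom lies in $I$, which by the action of $\Xi_{asp}$ on interpretations (Definition~\ref{def-xi-asp}) holds iff $\Xi_{asp}(p:\TV{s})\in\Xi_{asp}(I)$. The $\wedge$ and $\vee$ cases are immediate from the induction hypothesis together with the homomorphism clauses of $\Xi_{asp}$. For ontological negation the key point is that we are testing model-hood, not stable-model-hood, so negation-as-failure coincides with set complement: $\Xi_{asp}(I)$ satisfies \code{not}\,$\Xi_{asp}(L)$ exactly when $\Xi_{asp}(L)\notin\Xi_{asp}(I)$, which matches $I\models_\nu\neg L$ iff not $I\models_\nu L$.

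It then remains to treat the rule $head\leftarrow body$. Using the implication clause of Definition~\ref{def-truth-val-2}, $I\models head\leftarrow body$ iff $I\models\neg body\vee head$; since ontological negation is classical, De Morgan rewrites $\neg body$ as the disjunction of the negated body literals, so the rule is satisfied exactly when $I\models body$ implies $I\models head$ — the same truth condition as for the image ASP rule under the standard reading of $\leftarrow$. Combining this with the equivalences already established for the body (a conjunction of literals and negated literals) and the head (a disjunction of literals) closes the induction and yields $I\models P$ iff $\Xi_{asp}(I)\models\Xi_{asp}(P)$.

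I expect the difficulty to be organizational rather than conceptual. The one spot needing genuine care is the interface between the two claims: I must check that $\Xi_{asp}(I)$ is automatically a model of the background axioms whenever $I$ is a closed APC interpretation — in particular that $I$ carries the $\perp$ (\emph{unknown}) annotation on every ground term, which is exactly what axiom \code{truth(X,bottom)} demands — and, conversely, using the closure claim together with the bijection of Lemma~\ref{lem-one-one}, that every $ASP_{truth}$ model is $\Xi_{asp}(I)$ for a unique closed $I$. Once this correspondence between closed APC interpretations and background-axiom-satisfying ASP interpretations is pinned down, the satisfaction equivalence is a routine induction, and the downward-closure of $I$ keeps atomic satisfaction (membership) consistent on both sides.
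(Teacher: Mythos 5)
Your proposal is correct and follows essentially the same route as the paper: the paper's proof simply observes that the four required background axioms enforce the two closure conditions and that the satisfaction equivalence ``follows directly from the definitions,'' which is exactly the argument you spell out axiom-by-axiom and by structural induction. The extra care you take at the interface (that closed APC interpretations satisfy the \code{truth(X,bottom)} axiom, and that classical model-hood of \code{not} matches ontological negation) is a legitimate filling-in of details the paper leaves implicit, not a different approach.
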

\begin{proof}
Recall that every $APC_{\emph{truth}}$ is required to have the four rules listed
right after Definition~\ref{def-xi-asp}. These rules obviously enforce the
requisite closures.
The second part of the lemma follows directly from the definitions.
\end{proof}

\begin{lemma} \label{lem-gf-reduct}
  $\Xi_{asp}$ preserves the Gelfond-Lifshitz reduct:~~
$\Xi_{asp}(\frac{P}{I})~~=~~ \frac{\Xi{asp}(P)}{\Xi_{asp}(I)}$.
\qed
\end{lemma}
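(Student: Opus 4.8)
The plan is to prove the identity $\Xi_{asp}(\frac{P}{I}) = \frac{\Xi_{asp}(P)}{\Xi_{asp}(I)}$ by unwinding both definitions of the Gelfond-Lifschitz reduct and matching them rule-by-rule through the embedding. Since $\Xi_{asp}$ is defined recursively on the structure of rules and acts rule-wise on a program (treating $P$ as a set of rules), it suffices to show that for each rule $r$ of $P$, the transformation ``reduct then embed'' produces exactly the same ASP rule (or deletes it exactly when) as ``embed then reduct.'' Both reducts delete entire rules and delete negated literals from surviving rules, so the argument naturally splits into checking the deletion conditions and checking the surviving-rule bodies.

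First I would recall that the APC reduct (1) deletes a rule of $P$ whenever some body literal $\neg p : \TV{s}$ has $I \models p : \TV{s}$, and (2) strips the remaining $\neg p : \TV{s}$ literals from surviving rules. On the ASP side, the standard reduct deletes a rule of $\Xi_{asp}(P)$ whenever some $\code{not}\ t$ in its body has $\Xi_{asp}(I) \models t$, and strips the remaining default-negated literals. The key linking fact is that, by the clause $\Xi_{asp}(\neg L) = \code{not}\ \Xi_{asp}(L)$, a body literal $\neg p : \TV{s}$ of $r$ corresponds precisely to the default-negated ASP literal $\code{not}\ \code{truth(p,}tv_{asp}(\TV{s})\code{)}$ in $\Xi_{asp}(r)$. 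By Lemma~\ref{lem-xi-models} (its model-preservation part), $I \models p : \TV{s}$ holds if and only if $\Xi_{asp}(I) \models \Xi_{asp}(p : \TV{s})$, i.e.\ if and only if the ASP membership condition triggers. Hence a rule $r$ is deleted by the APC reduct exactly when $\Xi_{asp}(r)$ is deleted by the ASP reduct, and the sets of surviving rules correspond under $\Xi_{asp}$.

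For the surviving rules I would argue that stripping commutes with the embedding. Because $\Xi_{asp}$ distributes over conjunction in the body (the clause $\Xi_{asp}(L \wedge \phi) = \Xi_{asp}(L) \wedge \Xi_{asp}(\phi)$) and sends each $\neg p : \TV{s}$ to the corresponding $\code{not}$-literal, the image of ``$r$ with its negative literals removed'' equals ``$\Xi_{asp}(r)$ with its $\code{not}$-literals removed.'' This is a short structural induction on the number of body conjuncts: the positive literals and the head are untouched by both reducts and are mapped identically by $\Xi_{asp}$, while each negative conjunct is either removed on both sides (if the rule survives) or witnesses deletion on both sides. Combining the rule-by-rule correspondence gives the set equality between $\Xi_{asp}(\frac{P}{I})$ and $\frac{\Xi_{asp}(P)}{\Xi_{asp}(I)}$.

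The only point needing care — and the closest thing to an obstacle — is the bookkeeping around the required background axioms and the downward/$lub$ closure conditions. One must confirm that the four mandatory $ASP_{truth}$ rules (and the closure structure of interpretations from Lemma~\ref{lem-xi-models}) do not disturb the correspondence: these axioms contain no default negation, so they are never altered or deleted by either reduct and simply pass through $\Xi_{asp}$ unchanged on both sides. I would note this explicitly so the reader sees that the background axioms are reduct-invariant, and also remark that the biconditional $I \models p : \TV{s} \iff \Xi_{asp}(I) \models \Xi_{asp}(p : \TV{s})$ from Lemma~\ref{lem-xi-models} is exactly what licenses the deletion-condition matching. With those two observations in place, the equality is a direct, routine verification and no genuine difficulty remains.
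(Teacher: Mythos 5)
Your proposal is correct and follows essentially the same route as the paper's own proof: both use the model-preservation part of Lemma~\ref{lem-xi-models} to match the rule-deletion condition and the stripping of negated body literals rule-by-rule under $\Xi_{asp}$. Your extra remark that the negation-free background axioms are reduct-invariant is a harmless (and mildly clarifying) addition the paper leaves implicit.
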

\begin{proof} 
For every predicate $p : \textbf{s}\in P$, we have $I\models p :
\textbf{s}$ if and only if $\Xi_{asp}(I)\models \emph{truth}(p,\textbf{s})$, by
Lemma~\ref{lem-xi-models}. By the same lemma, if $r \in P$ then
$I\models p : \textbf{s}$ where $\neg p : \textbf{s} \in body(r)$ if and only if
$\Xi_{asp}(I)\models \emph{truth}(p,\textbf{s})$, where $not\ \emph{truth}(p,\textbf{s})
\in body(\Xi_{asp}(r))$. As a result, rule $r$ gets eliminated by 
Gelfond-Lifschitz reduction if and only if $\Xi_{asp}(r)$ is eliminated and a
negative literal in the body of $r$ gets dropped if and only if its image
in $\Xi_{asp}(r)$ gets dropped.
\end{proof}

\begin{lemma}\label{model_equivalence}
Let $I$ be a APC Herbrand interpretation.
$J$ is an APC Herbrand model of $\frac{P}{I}$ if and only if $~\Xi_{asp}(J)$ is a model of $~\Xi_{asp}(\frac{P}{I})$.
\qed
\end{lemma}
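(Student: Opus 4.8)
The plan is to recognize that this lemma is essentially an instance of Lemma~\ref{lem-xi-models} and to reduce it to that result rather than reproving anything from scratch. The key observation is that the Gelfond-Lifschitz reduct $\frac{P}{I}$ is itself an $APC_{LP}$ program: by its definition it is obtained from $P$ by deleting certain rules and dropping the ontologically-negated literals $\neg p : \TV{s}$ from the bodies of the remaining ones, so every rule of $\frac{P}{I}$ still has a disjunction of epistemic literals as its head and a conjunction of (now purely positive) epistemic literals as its body. That is exactly the syntactic shape required of an $APC_{LP}$ rule, so $\frac{P}{I}$ lies within the scope of Lemma~\ref{lem-xi-models}.

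First I would make this membership explicit, noting in particular that the absence of ontological negation in $\frac{P}{I}$ causes no difficulty, since an $APC_{LP}$ rule is permitted to have an empty negative part. Then I would simply instantiate Lemma~\ref{lem-xi-models} with the program $\frac{P}{I}$ in place of $P$ and the interpretation $J$ in place of $I$. That instantiation reads: $J$ is an APC Herbrand model of $\frac{P}{I}$ if and only if $\Xi_{asp}(J)$ is a model of $\Xi_{asp}(\frac{P}{I})$, which is precisely the statement to be proved.

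The only point requiring care --- and it is not really an obstacle --- is to check that ``model'' is being read consistently on both sides: on the APC side a model must satisfy the two closure conditions of an APC Herbrand interpretation, while on the $ASP_{truth}$ side the background axioms listed right after Definition~\ref{def-xi-asp} enforce the corresponding closures under $lub_{\leq}$ and downward under $\leq$. Lemma~\ref{lem-xi-models} already packages exactly this correspondence, so no further argument is needed and the present lemma follows as a direct corollary. If a self-contained argument were preferred, one could instead replay the short structural induction underlying Lemma~\ref{lem-xi-models} over the rules of $\frac{P}{I}$, but invoking that lemma on the reduct is the cleaner route.
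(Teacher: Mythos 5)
Your proof is correct, and it takes a different (and arguably cleaner) route than the paper's. The paper proves Lemma~\ref{model_equivalence} by a direct check on rules: for each rule $r$ of the reduct, $J \models head(r)$ iff $\Xi_{asp}(J) \models head(\Xi_{asp}(r))$ and likewise for bodies, hence satisfaction of each rule, and so of the whole program, is preserved. You instead observe that $\frac{P}{I}$ is itself a (negation-free) $APC_{LP}$ program and that the statement is therefore literally an instantiation of the second part of Lemma~\ref{lem-xi-models} with $\frac{P}{I}$ in place of $P$ and $J$ in place of $I$. That reduction is legitimate: the reduct's rules retain the required syntactic shape, $\Xi_{asp}$ is defined structurally so it applies to the reduct, and the closure/background-axiom issue you flag is exactly what Lemma~\ref{lem-xi-models} already handles. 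What your approach buys is economy and an explicit acknowledgment that Lemma~\ref{model_equivalence} is a special case of an earlier result rather than new content; what the paper's direct argument buys is self-containedness (it does not lean on the rather terse ``follows directly from the definitions'' justification of Lemma~\ref{lem-xi-models}, though in substance both arguments bottom out in the same rule-by-rule verification). Either way the lemma stands.
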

\begin{proof}
If $r\in P$ is a rule then $J\models head(r)$ if and only if
$\Xi_{asp}(J)\models head(\Xi_{asp}(r))$ and $J\not\models body(r)$ if and
only if $\Xi_{asp}(J)\not\models body(\Xi_{asp}(r))$. Thus, $J\models r$ if
and only if $\Xi_{asp}(J)\models \Xi_{asp}(r)$.
\end{proof}

\begin{lemma}\label{subset_equivalence}
Let $I_{1}$ and $I_{2}$ be APC \textit{Herbrand\ interpretations}. $I_{1}\subseteq I_{2}$ if and only if~ $\Xi_{asp}(I_{1})\subseteq \Xi_{asp}(I_{2})$.
\qed
\end{lemma}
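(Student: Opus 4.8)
The plan is to reduce this set-inclusion equivalence to the injectivity of $\Xi_{asp}$ on individual APC atoms, which is already established in Lemma~\ref{lem-one-one}. Recall from Definition~\ref{def-xi-asp} that $\Xi_{asp}$ acts on an interpretation pointwise: each APC atom $p : \TV{s} \in I$ is sent to the single $ASP_{truth}$ atom $\texttt{truth}(p, tv_{asp}(\TV{s}))$, so $\Xi_{asp}(I)$ is precisely the image of $I$ under this atom-level map. The lemma then becomes the elementary fact that an injective map both preserves and reflects inclusion between subsets of its domain. The closure conditions on Herbrand interpretations play no role here, since $\subseteq$ is plain set inclusion on atoms and $\Xi_{asp}$ sends atoms to atoms.

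First I would dispatch the forward direction, which needs no injectivity. If $a \in \Xi_{asp}(I_1)$ then $a = \Xi_{asp}(p : \TV{s})$ for some $p : \TV{s} \in I_1$; since $I_1 \subseteq I_2$ we have $p : \TV{s} \in I_2$, and therefore $a \in \Xi_{asp}(I_2)$. For the backward direction I would take an arbitrary $p : \TV{s} \in I_1$; its image $\Xi_{asp}(p : \TV{s})$ lies in $\Xi_{asp}(I_1)$ and hence in $\Xi_{asp}(I_2)$ by hypothesis, so there is some $q : \TV{r} \in I_2$ with $\Xi_{asp}(q : \TV{r}) = \Xi_{asp}(p : \TV{s})$. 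Injectivity of $\Xi_{asp}$ on atoms then forces $q : \TV{r} = p : \TV{s}$, giving $p : \TV{s} \in I_2$ and thus $I_1 \subseteq I_2$.

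The only place that uses more than set-theoretic bookkeeping is this last appeal to injectivity in the backward direction, so that is the main obstacle---yet it is already supplied by Lemma~\ref{lem-one-one}, and concretely it just records that distinct APC atoms reify to distinct $\texttt{truth}/2$ facts: they differ either in the reified predicate term $p$ or in the truth annotation, and $tv_{asp}$ is itself a bijection between the four annotations and $\{\texttt{t}, \texttt{f}, \texttt{top}, \texttt{bottom}\}$.
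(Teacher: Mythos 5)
Your proposal is correct and follows essentially the same route as the paper, which simply asserts that the lemma ``follows directly from the definition of $\Xi_{asp}$ and its inverse,'' i.e.\ from the bijectivity of the atom-level map. You have merely made explicit the standard argument (images preserve inclusion; injectivity reflects it) that the paper leaves implicit.
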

\begin{proof}
  Follows directly from the definition of $\Xi_{apc}$ and its inverse.
\end{proof}

\begin{theorem} \label{th-stablemod}
$M$ is a stable model of an $APC_{LP}$ program $P$ if and only if ~$\Xi_{asp}(M)$ is a stable model of $\Xi_{asp}(P)$.
\qed
\end{theorem}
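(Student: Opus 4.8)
The plan is to prove Theorem~\ref{th-stablemod} by assembling the previously established lemmas into an equivalence that tracks the stable-model definition through the embedding. Recall that $M$ is a stable model of $P$ precisely when $M$ is a minimal model of the reduct $\frac{P}{M}$, and likewise $\Xi_{asp}(M)$ is a stable model of $\Xi_{asp}(P)$ precisely when it is a minimal model of $\frac{\Xi_{asp}(P)}{\Xi_{asp}(M)}$. So the entire argument reduces to showing that $M$ is a \emph{minimal} model of $\frac{P}{M}$ if and only if $\Xi_{asp}(M)$ is a \emph{minimal} model of $\frac{\Xi_{asp}(P)}{\Xi_{asp}(M)}$, where minimality is with respect to set inclusion.

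First I would rewrite the reduct appearing on the ASP side. By Lemma~\ref{lem-gf-reduct}, the embedding commutes with the Gelfond--Lifschitz reduct, so $\frac{\Xi_{asp}(P)}{\Xi_{asp}(M)} = \Xi_{asp}(\frac{P}{M})$. This lets me replace the ASP-side reduct by the image of the APC-side reduct, so that both notions of minimality are stated over the single object $\frac{P}{M}$ and its image. Next I would use Lemma~\ref{model_equivalence} (specialized to the interpretation $I = M$): for any candidate interpretation $J$, $J$ is a model of $\frac{P}{M}$ if and only if $\Xi_{asp}(J)$ is a model of $\Xi_{asp}(\frac{P}{M})$. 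This gives a bijection, via $\Xi_{asp}$, between the models of the APC reduct and the models of the ASP reduct, using the fact that $\Xi_{asp}$ is a one-to-one correspondence (Lemma~\ref{lem-one-one}).

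It then remains to verify that this bijection respects set inclusion, so that it carries minimal models to minimal models. That is exactly Lemma~\ref{subset_equivalence}: $J_1 \subseteq J_2$ if and only if $\Xi_{asp}(J_1) \subseteq \Xi_{asp}(J_2)$. Combining these, suppose $M$ is a minimal model of $\frac{P}{M}$; if $\Xi_{asp}(M)$ failed to be minimal, some strictly smaller model $N \subsetneq \Xi_{asp}(M)$ of the ASP reduct would exist, and since $\Xi_{asp}$ is surjective we may write $N = \Xi_{asp}(J)$; then by Lemma~\ref{model_equivalence} $J$ models $\frac{P}{M}$ and by Lemma~\ref{subset_equivalence} $J \subsetneq M$, contradicting minimality of $M$. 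The converse direction is symmetric using the inverse of $\Xi_{asp}$.

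The step I expect to carry the most weight is the reduction of ``stable model'' to ``minimal model of the reduct,'' because it requires matching the ASP-side reduct $\frac{\Xi_{asp}(P)}{\Xi_{asp}(M)}$ with $\Xi_{asp}(\frac{P}{M})$; everything downstream is a clean transport of minimality through a set-inclusion-preserving bijection. The one subtlety worth a sentence of care is that minimality in $ASP_{truth}$ is taken among models that include the four background closure axioms: by Lemma~\ref{lem-xi-models} the models of any $ASP_{truth}$ program are automatically $lub_\leq$-closed and downward-closed, so the candidate counterexample $N$ above is guaranteed to lie in the image of $\Xi_{asp}$, which is precisely what makes the surjectivity argument legitimate. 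Granting the four lemmas, the theorem follows with no further computation.
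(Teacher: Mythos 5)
Your proof is correct and follows essentially the same route as the paper's: reduce stability to minimality of the reduct, use Lemma~\ref{model_equivalence} to get a one-to-one correspondence between the models of the two reducts, and use Lemma~\ref{subset_equivalence} to see that this correspondence preserves set inclusion and hence minimality. You are in fact slightly more careful than the paper on two points it leaves implicit --- explicitly invoking Lemma~\ref{lem-gf-reduct} to identify $\frac{\Xi_{asp}(P)}{\Xi_{asp}(M)}$ with $\Xi_{asp}(\frac{P}{M})$, and checking via Lemma~\ref{lem-xi-models} that any candidate smaller model on the ASP side lies in the image of $\Xi_{asp}$ --- but these are refinements of the same argument, not a different one.
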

\begin{proof}
By Lemma \ref{model_equivalence}, $J$ is a model of $\frac{P}{M}$ if and
only if $\Xi_{asp}(J)$ is a model of
$\frac{\Xi_{asp}(P)}{\Xi_{asp}(M)}$. Thus, the set of models for
$\frac{P}{M}$ is in a one-one correspondence with the set of models for
$\frac{\Xi_{asp}(P)}{\Xi_{asp}(M)}$. By
Lemma~\ref{subset_equivalence}, this correspondence preserves
set-inclusion, so the set of \emph{minimal} models of
$\frac{P}{M}$ stands in one-one correspondence with respect to $\Xi_{asp}$
with the set of minimal models of $\frac{\Xi_{asp}(P)}{\Xi_{asp}(M)}$.
\end{proof}

A consistency preference relation $<_\cS$, where $\cS =
(S_{1},S_{2},\ldots,S_{n})$, is translated into the following Asprin
\cite{brewka2015asprin} $lexico$ preference relation $\cA$ along with
several \texttt{subset} preferences relations, each corresponding to one of
the $<_{S_i}$ that are part of $<_\cS$ (see Definition~\ref{def_mcpm}).

\code{\#preference(} $\cA$ \code{, lexico)\{} 
\code{1::name(} $s_1$ \code{);}
\code{2::name(} $s_2$ \code{);} $\ldots$\code{;}
\code{n::name(} $s_n$ \code{)\}.}

\code{\#preference(} $s_1$ \code{, subset)\{} the list of elements in $\Xi_{asp}(S_1)$ \code{\}. } 


$\ldots$

\code{\#preference(} $s_n$ \code{, subset)\{} the list of elements in $\Xi_{asp}(S_n)$ \code{\}. } 

\begin{lemma} \label{pref_equivalence}
Let $I_{1}$ and $I_{2}$ be APC \textit{Herbrand\ interpretations},
$\cS = (S_{1},S_{2},\ldots,S_{n})$ be a consistency preference relation and
$\cA$ be its corresponding Asprin preference relation. 
$I_{1}<_{\cS}I_{2}$ if and only if $\Xi_{asp}(I_{1})$ is preferred over $\Xi_{asp}(I_{2})$ with respect to $\cA$.
\qed
\end{lemma}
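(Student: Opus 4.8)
The plan is to reduce the statement to the two building-block definitions: the consistency-preference relation $<_\cS$ from Definition~\ref{def_mcpm} and Asprin's \texttt{lexico}/\texttt{subset} semantics as encoded above. Since $\cA$ is a \texttt{lexico} composition of \texttt{subset} preferences $s_1,\ldots,s_n$ whose element lists are exactly $\Xi_{asp}(S_1),\ldots,\Xi_{asp}(S_n)$, I first unfold both sides of the biconditional into their defining lexicographic conditions and then match them component by component. The whole argument rests on Lemma~\ref{subset_equivalence}, which tells me that $\Xi_{asp}$ preserves (and reflects) set inclusion, together with the fact that $\Xi_{asp}$ is a bijection (Lemma~\ref{lem-one-one}) that commutes with intersection on the relevant sets.

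First I would establish the single-component case. Fix an index $i$ and consider the preference $S_i$. The key algebraic observation is that $\Xi_{asp}$ distributes over intersection on the Herbrand base: because $\Xi_{asp}$ is injective, $\Xi_{asp}(S_i \cap I_j) = \Xi_{asp}(S_i) \cap \Xi_{asp}(I_j)$ for $j \in \{1,2\}$. Granting this, $S_i \cap I_1 \subset S_i \cap I_2$ holds if and only if $\Xi_{asp}(S_i \cap I_1) \subset \Xi_{asp}(S_i \cap I_2)$ (proper inclusion is preserved and reflected by the bijection via Lemma~\ref{subset_equivalence} applied in both directions), which is exactly the condition that $\Xi_{asp}(I_1)$ is strictly better than $\Xi_{asp}(I_2)$ under the \texttt{subset} preference $s_i$ whose elements are $\Xi_{asp}(S_i)$. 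The equal case $S_i \cap I_1 = S_i \cap I_2$ transfers identically, giving $I_1 =_{S_i} I_2$ iff $\Xi_{asp}(I_1)$ and $\Xi_{asp}(I_2)$ are $s_i$-indifferent.

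Next I would assemble the lexicographic order. By Definition~\ref{def_mcpm}, $I_1 <_\cS I_2$ means there is a least index $i$ with $I_1 =_{S_j} I_2$ for all $j < i$ and $I_1 <_{S_i} I_2$. Asprin's \texttt{lexico} relation $\cA$ declares precisely the same: $\Xi_{asp}(I_1)$ is preferred over $\Xi_{asp}(I_2)$ iff at the first distinguishing priority level $i$ the \texttt{subset} preference $s_i$ strictly prefers $\Xi_{asp}(I_1)$, with all higher-priority levels $j < i$ being indifferent. Combining this definitional coincidence with the per-component equivalence from the previous step yields the biconditional: each conjunct $I_1 =_{S_j} I_2$ matches $s_j$-indifference and the decisive conjunct $I_1 <_{S_i} I_2$ matches strict $s_i$-preference, so the existence of a witnessing index on the APC side is equivalent to the existence of the same witnessing index on the ASP side.

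The main obstacle, and the only step that needs care rather than bookkeeping, is the intersection-commutation fact $\Xi_{asp}(S_i \cap I_j) = \Xi_{asp}(S_i) \cap \Xi_{asp}(I_j)$ and, relatedly, the faithful transfer of \emph{proper} inclusion. Injectivity of $\Xi_{asp}$ (Lemma~\ref{lem-one-one}) gives the intersection identity directly, and proper inclusion transfers because Lemma~\ref{subset_equivalence} is itself an ``if and only if'' on $\subseteq$, so applying it to both $\Xi_{asp}(I_1) \subseteq \Xi_{asp}(I_2)$ and the negation of the reverse inclusion pins down $\subset$ exactly. One subtlety worth flagging is that the \texttt{subset} preference in Asprin ranks by inclusion of the \emph{selected} elements (here, the intersection with $\Xi_{asp}(S_i)$), so I must be explicit that the relevant comparison is $S_i \cap I_j$ rather than $I_j$ itself; once that alignment is stated, the rest is a routine unfolding of the two lexicographic definitions.
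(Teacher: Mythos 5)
Your proposal is correct and follows essentially the same route as the paper's proof, which simply observes that the Asprin \texttt{lexico}/\texttt{subset} semantics applied to $\cA$ is a direct paraphrase of the lexicographic order in Definition~\ref{def_mcpm} and that $\Xi_{asp}$ maps $\top$-predicates onto the corresponding \texttt{top}-literals. You merely make explicit the bookkeeping the paper leaves implicit (the intersection-commutation via injectivity and the transfer of proper inclusion via Lemma~\ref{subset_equivalence}), which is a welcome amount of extra rigor but not a different argument.
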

\begin{proof}
  The definition in the Asprin manual of the Asprin \emph{lexico} and 
  \emph{subset} preference relations, as applied to our preference
   statements $\cA$
  given just prior to Lemma~\ref{pref_equivalence}, 
  is just a paraphrase of the lexicographical
  consistency-preference relation $\cS$ in Definition~\ref{def_mcpm}.
  The lemma now follows from the obvious fact that $\Xi_{asp}$ maps
  $\top$-literals of ASP onto the \texttt{top}-literals of $ASP_{\emph{truth}}$,
  which have the form $\emph{truth}(p,\code{top})$.
\end{proof}

\begin{theorem} \label{th-most-preferred}
$M$ is a $most\ consistency\ preferred\ stable\ model$ of an $APC_{LP}$ program $P$ with respect to a 
consistency preference relation $<_\cS$ (where $\cS =
(S_{1},S_{2},\ldots,S_{n})$)
if and only if ~$\Xi_{asp}(M)$ is a preferred model of $\Xi_{asp}(P)$ with respect to the corresponding Asprin preference relation $\cA$.
\qed
\end{theorem}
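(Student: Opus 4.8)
The plan is to split the biconditional along the two clauses that define a consistency-preferred stable model — being a stable model of $P$ and being optimal under $<_\cS$ — and to transport each clause across the embedding $\Xi_{asp}$ separately, relying on the correspondences already proved. On the Asprin side, a \emph{preferred} model of $\Xi_{asp}(P)$ is likewise a two-part object: it is an answer set of $\Xi_{asp}(P)$ that is $\cA$-optimal among the answer sets. So both notions factor into the same shape, ``stable model / answer set'' together with ``optimal under the order,'' and the task reduces to matching these two factors.

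First I would use Theorem~\ref{th-stablemod} together with Lemma~\ref{lem-one-one} to establish that $\Xi_{asp}$ restricts to a bijection between the stable models of $P$ and the answer sets of $\Xi_{asp}(P)$; this identifies the two domains over which the respective optimizations run. Then I would prove the optimality equivalence by contraposition in each direction. For the forward direction, assume $M$ is a most consistency-preferred stable model but that $\Xi_{asp}(M)$ is not $\cA$-optimal; then some answer set $A'$ is strictly $\cA$-preferred to $\Xi_{asp}(M)$. Writing $A' = \Xi_{asp}(J)$ by surjectivity, $J$ is a stable model of $P$ by Theorem~\ref{th-stablemod}, and $J <_\cS M$ by Lemma~\ref{pref_equivalence}, contradicting the optimality of $M$. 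The reverse direction runs symmetrically: a stable model $J$ with $J <_\cS M$ maps, via Lemma~\ref{pref_equivalence} and Theorem~\ref{th-stablemod}, to an answer set strictly $\cA$-preferred to $\Xi_{asp}(M)$.

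The step I expect to need the most care is aligning the two optimization domains. The APC-side phrase ``$M$ is a most consistency-preferred model with respect to $<_\cS$'' is read through Definition~\ref{def_mcpm} as ``no other \emph{model} $J$ satisfies $J <_\cS M$,'' whereas Asprin minimizes only over answer sets. To make the equivalence exact I would make explicit that, within the definition of a consistency-preferred stable model, the comparison that matters is against the other stable models (equivalently, the witnesses $J$ produced in the contraposition are themselves stable), so that both sides genuinely optimize over the corresponding sets. Once this is pinned down, the remainder is a routine order-transport: Lemma~\ref{pref_equivalence} supplies the order-isomorphism carrying the strict relation $<_\cS$ to Asprin's strict preference, and the standing convention $S_n = \cB_\top$ only fixes the final lexicographic component and does not otherwise affect the argument.
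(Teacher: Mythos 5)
Your proof is correct and follows the same basic route as the paper's: factor the definition into ``stable model'' plus ``optimal under the order,'' carry the first factor across $\Xi_{asp}$ via the stable-model correspondence (Theorem~\ref{th-stablemod}, itself built on Lemma~\ref{model_equivalence}), and carry the second via the order-isomorphism of Lemma~\ref{pref_equivalence}. The one place where you genuinely go beyond the paper is the point you flag yourself: Definition~\ref{def_mcpm} literally quantifies over \emph{all} models of $P$ when defining ``most consistency-preferred,'' while Asprin optimizes only over answer sets, and these need not coincide --- one can construct an $APC_{LP}$ program with a unique stable model $M$ containing a $\top$-atom together with a non-stable model avoiding it, so that $\Xi_{asp}(M)$ is Asprin-preferred while $M$ fails the literal reading of the definition. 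Your decision to read the optimality clause as ranging over stable models (equivalently, to insist that the witnesses $J$ in the contraposition be stable) is therefore not just a matter of care but is actually required for the equivalence to hold as stated; the paper's two-line proof passes over this silently. With that reading fixed, your bijection-plus-contraposition argument is complete and, if anything, more rigorous than the published one.
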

\begin{proof}
By Lemma \ref{model_equivalence}, $J$ is a model of
$\frac{P}{M}$ if and only if $\Xi_{asp}(J)$ is a model of
$\frac{\Xi_{asp}(P)}{\Xi_{asp}(M)}$.  Since, by Lemma~\ref{pref_equivalence},
$\Xi_{asp}$ maps the preference relation $\cS$ over the APC models
into the preference relation $\cA$ over the ASP models, the result
follows.
\end{proof}


\section{Jobs Puzzle and Inconsistency} \label{puzzle}

Jobs Puzzle \cite{Wos_Overbeck_Lusk_Boyle_1984} is a classical logical
puzzle that became a benchmark of sorts for 
many automatic theorem provers \cite{Shapiro11,schwitter2013jobs};
it is also included in TPTP.\footnote{
  Thousands of Problems for Theorem Provers
(\url{http://www.cs.miami.edu/~tptp/}).
}
The usual description of Jobs Puzzle does not include
implicit knowledge, like the facts that
a person is either a male or a female (but not both),
the husband of a person must be unique, etc., so we add this knowledge
explicitly, like \cite{schwitter2013jobs}. We also changed the name Steve to Robin in order to better illustrate one form of inconsistency.

\begin{enumerate}
\item There are four people: Roberta, Thelma, Robin and Pete.
\item Among them, they hold eight different jobs.
\item Each holds exactly two jobs.
\item The jobs are: chef, guard, nurse, telephone operator, police officer (gender not implied), teacher, actor, and boxer.
\item The job of nurse is held by a male.
\item The husband of the chef is the telephone operator.
\item Roberta is not a boxer.
\item Pete has no education past the ninth grade.
\item Roberta, the chef, and the police officer went golfing together.
\end{enumerate}

In sum there are four people and eight jobs and to solve the puzzle one
must figure out who holds which jobs. The solution is that
Thelma is a chef and a boxer (and is married to Pete). Pete is a telephone
operator and an actor. Roberta is a teacher and a guard. Finally,
Robin is a police officer and a nurse.

However, if we inject inconsistency into the puzzle, current logical
approaches fail because they are based on logics
that do not tolerate inconsistency. Consider the following examples.

\begin{example}
  Let us add to the puzzle that ``Thelma is an actor.'' Given
  that the original puzzle implies that Thelma is \emph{not} an actor (she
  was a chef and a boxer), this addition causes inconsistencies.  A
  first-order encoding of the puzzle (as, say, in TPTP) or an ASP-based one
  in \cite{schwitter2013jobs} will not find any models. In contrast, an
  encoding in $APC_{LP}$ can isolate inconsistent information.
  There are two possibilities: one where Thelma is an actor and the
  other where Thelma is a female. If we add background knowledge
  that Thelma is a female's name, it is less likely that Thelma's gender
  will be
  inconsistent, so the only consistency-preferred model will have one
  inconsistent conclusion that Thelma is an actor, but all other true
  facts will remain consistent.
\end{example}

\begin{example}
  Consider adding the sentences ``Robin is a male name'' and ``Robin is a
  female name,'' which will imply that
  Robin is both a male and female.  The
  first-order and ASP-based encodings will, again, find no models, while an
  $APC_{LP}$-based encoding will localize inconsistency to just $male(robin) : \top$ and female$(robin) : \top$.
  \end{example}

\begin{example}
  Consider adding the sentence ``Robin is Thelma's husband.'' Since 
  the original job puzzle implies that Pete is Thelma's husband, this will
  cause inconsistency. If we add the background knowledge that husband
  is unique, 
  again, the encoding of this modified puzzle in
  $ACP_{LP}$ will localize inconsistency to just the aforesaid
  husband-facts.
\end{example}


\section{Knowledge Representation Principles for Inconsistency} \label{puzzlePrinciples}

Mere encoding of Jobs Puzzle in $APC_{LP}$ is not enough because
it is not unique: when inconsistency is taken into account, more
information needs to be provided to obtain the encodings that match user intent.
The main problem is that, if inconsistency is allowed, the number of
possible worlds can grow to many hundreds even in relatively simple 
scenarios like Jobs Puzzle, and this practically annuls  the benefits of 
the switch to a paraconsistent logic.
We have already seen small examples of such scenarios at the end of
Section~\ref{apc_section}, which motivated our notion of consistency
preference, but there are more. We organize these scenarios around
six main principles.

\medskip

\begin{principle}{Contrapositive inference}\label{p_contrapositive}
Like in classical logic, contrapositive inference may be
useful for knowledge representation.
Consider the following sentences:
\begin{itemize}
\item[] If someone is a nurse, then that someone is educated.
\item[] Pete is not educated.
\end{itemize}
We could encode the first sentence as $\emph{educated}(X):\TV{t} \leftarrow
nurse(X):\TV{t}$ or as $\emph{educated}(X):\TV{t} \epiarrow
nurse(X):\TV{t}$.
Classically, the above sentences imply that Pete is not a nurse, but the
encoding of the first sentence using the ontological implication
$\leftarrow$ would not allow for that. If contrapositive inference is
required, 
epistemic implication should be used.

\begin{example}\label{ex_contrapositive}
Consider $P = \{$educated$(X) : \TV{t}\epiarrow nurse(\allowbreak X) :
\TV{t}, ~~ \sim$ educated$(pete) : \TV{t}\}.$ It has only one most
consistency preferred model with respect to $<_{\cS}$ (with $\cS =
(\cB_{\top})$), namely $m=\{$educated$(pete) : \TV{f}, ~~ nurse(pete) :
\TV{f}\}$. ~
Therefore, $P \epicmodel nurse(pete) : \TV{f}$ holds.

The above example uses contrapositive inference, but this is not always desirable. 
For instance, suppose $P' = \{male(X) : \TV{t}\leftarrow nurse(X) :
\TV{t}, ~~ \sim male(robin) : \TV{t}\}.$ Here we use ontological implication to block contrapositive inference. Observe that $P'$ has a most
consistency preferred model with respect to $<_{\cS}$, namely $m=\{male(robin) : \TV{f}, ~~ nurse(robin) :
\perp\}$. Therefore, $nurse(robin) : \TV{f}$ does not hold, and this is exactly what we want, even if Robin happens to be
not a male.\footnote{
  In the USA as opposed to the U.K.
  }
\qed
\end{example}

\end{principle}

\medskip

\begin{principle}{Propagation of inconsistency}\label{p_propagation_inconsistency}

  As discussed in Example \ref{ont_top}, APC gives us a choice of whether to
  draw conclusions from inconsistent information or not, and it is a useful
  choice. One way to block such inferences, illustrated in that example, is
  to use epistemic implication.
  Another way is to use the ontological implication with the $\TV{t} + \neg
  \top$ pattern in the rule body, e.g.,
  \[
  \emph{educated}(X) : \textbf{t} ~\leftarrow~ nurse(X) : \textbf{t} ~\wedge~
  \neg nurse(X) : \top.
  \]
  Both techniques block inferences from inconsistent information, but the
  second also blocks inference by contraposition, as discussed in Principle
  \ref{p_contrapositive}.
  The following examples illustrate the use of both of these methods.

\begin{example}\label{ex_epi_non_prop_inconst}
Let $P = \{$educated$(X) : \TV{t}\epiarrow nurse(\allowbreak X) :
\TV{t}, ~~ nurse(pete) : \top\}.$ Observe that there is one most
consistency preferred model with respect to $<_{\cS}$ (as before, $\cS =
(\cB_{\top})$)$:$~~
$m=\{$educated$(pete) : \perp, ~~ nurse(pete) : \top\}$. ~
Therefore, $P \epinotcmodel~$educated$(pete) : \TV{t}$.
\qed
\end{example}

\begin{example}\label{ex_ont_not_inconst}
Let $P ~ =\ \{$educated$(X) : \textbf{t} \leftarrow nurse(X) :
\textbf{t} \wedge \neg nurse(X) : \top, ~~~ nurse(pete) :
\top\}$. As in the previous example,
$P$ has a most consistency preferred model $m\ =\
\{nurse(pete) : \top, ~~~$educated$(pete) : \perp\}$~ and so~  $P ~\epinotcmodel~
$educated$(pete) : \TV{t}$.
\qed
\end{example}
In both of these examples, inconsistency is not propagated through the
rules, but Example~\ref{ex_epi_non_prop_inconst} allows for contrapositive
inference, while Example~\ref{ex_ont_not_inconst} does not. Indeed, suppose
that instead of $nurse(pete) : \top$ we had $\emph{educated}(pete) : \TV{f}$.
Then, in the first case, $nurse(pete):\TV{f}$ would be derived, while in
the second it would not.

\end{principle}

Blocking contrapositive inference and non-propagation of inconsistency can
be applied selectively to some literals but not the others. 

\begin{example}\label{ex_mix_contrapositive_propagate_inconst}
Consider the following sentence, ``if a person holds a job of nurse then that person is educated''. It can be encoded as
\begin{equation*}
\begin{array}{l}
(educated(X) : \TV{t} ~\epiarrow~ nurse(X) : \TV{t})~\leftarrow~
person(X) : \TV{t}.
\end{array}
\end{equation*}
The rule allows propagation of inconsistency through the $person$-predicate
but blocks such propagation for
the $nurse$-predicate. It also inhibits contrapositive inference
of $person(pete):\TV{f}$ if the head of the rule is falsified by
the additional facts $nurse(pete) : \TV{t}$ and educated$(pete) : \TV{f}$. However, due to the head of the rule,
contrapositive inference would be allowed for  $nurse(pete):\TV{f}$ if 
educated$(pete) : \TV{f}$ was given.
\end{example}

\medskip

\begin{principle}{Polarity}\label{p_mutual_exclusion}

This principle addresses situations such as  the sentence ``A person must
be either a male or a female, but not both''. 
When inconsistency is possible, we want to say three things: that
any person must be either a male and or a female, that these facts cannot
be unknown, and that if one of these is inconsistent then the other is too.

\begin{example}\label{ex_male_female}
Let $P$ be:
\begin{enumerate}[leftmargin=1cm]
\item[] $male(X) : \TV{t} ~\vee~ $female$(X) : \TV{t}~\leftarrow~ person(X) : \TV{t}$
\item[] $male(X) : \TV{f} ~\vee~ $female$(X) : \TV{f}~\leftarrow~ person(X) : \TV{t}$
\item[] $male(X) : \top ~\leftarrow~ person(X) : \TV{t} ~\wedge~ $female$(X) : \top$
\item[] female$(X) : \top ~\leftarrow~ person(X) : \TV{t} ~\wedge~ male(X) : \top$
\item[] $person(robin) : \TV{t}$
\end{enumerate}
Two most consistency preferred models exist, which minimize the
inconsistency of $person(robin)$:
\begin{quote}
$m_{1} = \{person(robin) : \TV{t}, ~~male(robin) : \TV{t}, ~~\emph{female}(robin) :
\TV{f}\}$, and \\
\hspace*{3mm}
$m_{2} = \{person(robin) : \TV{t}, ~~male(robin) : \TV{f}, ~~\emph{female}(robin) : \TV{t}\}$. 
\end{quote}
If we add $male(robin) : \top$ (or female$(robin) : \top$) to $P$, then only one most consistency preferred
model remains:
$m = \{male(robin) : \top,~$female$(robin) : \top,~person(robin) : \TV{t}\}$.
\qed
\end{example}

Conditional $polarity$ (or $polar\ dependency$) is generally represented as
follows
\begin{equation*}
\begin{split}
&p : \textbf{t}\ ;\ q : \textbf{t}\leftarrow \emph{condition}.\\
&p : \textbf{f}\ ;\ q : \textbf{f}\leftarrow \emph{condition}.\\
&q : \top \leftarrow \emph{condition} \wedge p : \top.\\
&p : \top \leftarrow \emph{condition} \wedge q : \top.\\
\end{split}
\end{equation*}
where \emph{condition} is a conjunction of atomic formulas and $p$, $q$ are polar facts
with respect to that condition.
\end{principle}

\medskip

\begin{principle}{Consistency preference relations}\label{p_cpf}
Recall from Example \ref{cpr} that inconsistent information is not created equal, as people have different degrees of confidence in different pieces of information. 
For example, we have more confidence that someone whom we barely know 
is a person compared to   the
information about this person's marital situation (e.g., whether a husband
exists). 
Therefore, person-facts are more likely to be consistent than marriage-facts and so we need to define consistency preference relations to specify
the degrees of confidence.  Consistency preference relations were
introduced in Definition \ref{def_mcpm}, and we already had numerous
examples of its use.  In Jobs Puzzle encoding in
Appendix A, we use one, fairly elaborate, consistency preference
 relation.  It first sets person and job information to be of the
highest degree of confidence. Then, it prefers consistency of gender information of everybody but Robin. Third, it prefers consistency of the job assignment information. And finally, it minimizes inconsistency in
general, for all facts.
\end{principle}

\medskip

\begin{principle}{Complete knowledge}\label{p_ck}

  This principle stipulates that certain information is defined completely,
  and cannot be unknown ($\bot$). But it can be inconsistent. 
  Moreover, similarly to closed world assumption, negative information is
  preferred.
  For instance, if we do not know that someone is someone's husband, we may assume
  that that person is not. Such conclusions can be specified via a rule
  like this:
\begin{equation*}
husband(X,Y) : \TV{f} ~\leftarrow~ person(X) : \TV{t} ~\wedge~ person(Y) : \TV{t} ~\wedge~ \neg husband(X,Y) : \TV{t}
\end{equation*}
Note that, unlike, say ASP, jumping to negative
conclusions is not ensured by the stable model semantics of APC and must be
given explicitly. But the advantage is that it can be done selectively.
More generally, this type of reasoning can be specified as
\begin{equation*}
p : \TV{f}~\leftarrow~ \emph{condition} ~\wedge~ \neg p : \TV{t}.
\end{equation*}
if $p$ is known to be a predicate that is defined completely
under the $condition$.
\end{principle}

\medskip

\begin{principle}{Exactly $N$}\label{p_constraints}
  This principle captures the encoding of cardinality constraints in the
  presence of inconsistency. For instance, in
  Jobs Puzzle, the sentences ``Every person holds exactly two jobs'' and
  ``Every job is held by exactly one person'' are encoded as cardinality
  constraints:
  \vspace{1mm}\\
  \hspace*{6mm}
$2\ \{hold(X,Y) : \textbf{t}~~~ \emph{if} ~~~job(Y) : \textbf{t}\}\ 2 ~\leftarrow ~ person(X) : \textbf{t}.$

$~~~~ 1\ \{hold(X,Y) : \textbf{t} ~~~ \emph{if} ~~~  person(X) : \textbf{t}\}\ 1 ~\leftarrow ~ job(Y) : \textbf{t}.$

$~~~~ hold(X,Y) : \textbf{f}  ~\leftarrow ~ person(X) : \textbf{t} ~\wedge~
job(Y) : \textbf{t} ~\wedge~ \neg hold(X,Y) : \textbf{t}.$
\vspace{1mm}\\
These constraints count both true and inconsistent $hold$-facts, but can be
easily modified to count only consistent true facts.
Note the role of the last rule, which closes off the information
being counted by the constraint. This is necessary because if,
say, Pete is concluded to hold \emph{exactly} two jobs (of an actor and a
phone operator) then there should be nothing unknown about him holding
any other job. Instead, $hold(pete,X):\TV{f} \wedge hold(pete,X):\top$ should
be true for any other job $X$.

The general form of the exactly $N$ constraint is:
\begin{equation*}
\begin{split}
&N\ \{L : \textbf{t} ~~~\emph{if}~~~ \emph{condition}\}\ N ~\leftarrow~  guard.\\
&L : \textbf{f} ~\leftarrow~ guard ~\wedge~ \emph{condition} ~\wedge~ \neg L : \textbf{t}.
\end{split}
\end{equation*}
\end{principle}

As in ASP, such statements can be represented as a number of ground disjunctive
rules.  The ``exactly $N$'' constraints can be generalized to ``at least $N$ and at most
$M$'' constraints, if we extend the semantics in the direction of
\cite{niemela}.


\section{Comparison with Other Work}

Although a great deal of work is dedicated to 
paraconsistent logics and logical formalizations for word puzzles
separately,
we are unaware of any work that applies
paraconsistent logics to solving word 
puzzles that might contain inconsistencies. 
As we demonstrated, mere encoding of such puzzles in a paraconsistent
logic leads to an explosion of possible worlds, which is not
helpful.\footnote{
  Also see Appendix A and the ready-to-run examples at
  \url{https://bitbucket.org/tiantiangao/apc_lp}.
  }
Most paraconsistent logics
\cite{sep-logic-paraconsistent,paraBook,belnap1977useful,dacosta-74} 
deal with inconsistency
from the philosophical or mathematical point of view and do not discuss
knowledge representation.
Other paraconsistent logics 
\cite{subrahmanian-tcs-89,kifer1992theory}
were developed for definite logic programs and cannot be easily applied to
solving more complex knowledge representation problems that arise in word
puzzles.
An interesting question is whether our use of APC is essential, i.e., 
whether the notions of consistency-preferred models can be adapted to other
paraconsistent logics and the relationship with ASP can be established.
First, it is clear that such an adaptation is unlikely for proof-theoretic
approaches to inconsistency, such as \cite{dacosta-74}.  We do not know if
such an adaptation is possible for
model-theoretic approaches, such as \cite{belnap1977useful}.

On the word puzzles front,
\cite{Wos_Overbeck_Lusk_Boyle_1984} used the 
first-order logic theorem prover OTTER to solve Jobs Puzzle\footnote{\url{http://www.mcs.anl.gov/~wos/mathproblems/jobs.txt}} and
\cite{Shapiro11} represented Jobs Puzzle
in multiple logical languages:
TPTP,\footnote{\url{http://www.cs.miami.edu/~tptp/cgi-bin/SeeTPTP?Category=Problems\&Domain=PUZ\&File=PUZ019-1.p}}
Constraint Lingo \cite{FinkelMT04} layered on top of
the ASP system Smodels \cite{SyrjanenN01} as the backend, and 
the SNePS commonsense reasoning system \cite{Shapiro00}. 
More recently,
\cite{BaralD12,schwitter2013jobs} represented word puzzles using NL/CNL sentences,
and then
automatically translate them into ASP.
None of these underlying formalisms,
FOL, ASP, and SNePS, are equipped to reason in the 
presence of inconsistency. 
In contrast,
$APC_{LP}$, combined with the knowledge representation  principles
developed in Section~\ref{puzzlePrinciples},
localizes inconsistency and computes useful possible worlds.
In addition, $APC_{LP}$ has mechanisms to control how inconsistency is
propagated through inference, it allows one to prioritize inconsistent
information, and it provides several other ways to express user's intent
(through contraposition, completion of knowledge, etc.).


\section{Conclusion}
\label{sec-conclusion}

In this paper we discussed the problem of knowledge representation in the
presence of inconsistent information with particular focus on  representing
English sentences using logic, as in word puzzles
\cite{Wos_Overbeck_Lusk_Boyle_1984,Shapiro11,PonnuruFMT04,schwitter2013jobs,BaralD12}.
We have shown that a number of considerations play a role in deciding on a
particular encoding, which includes whether or not inconsistency should be
propagated through implications, relative degrees of confidence in
different pieces of information, and others.
We used the well-known Jobs, Zebra and  Marathon puzzles (see the appendices in the supplemental material)  
to illustrate many of the above issues and show how the conclusions change
with the introduction of different kinds of inconsistency into the
puzzle.

As a technical tool, we started with a paraconsistent logic called
\emph{Annotated Predicate Calculus} \cite{kifer1992logic} and then gave it
a special kind of non-monotonic semantics that is based on
\emph{consistency-preferred stable models}.  
We also showed that these models can be computed using ASP systems that
support preference relations over stable models, such as Clingo
\cite{gekakaosscsc11a} with the Asprin extension \cite{brewka2015asprin}.

For future work, we will consider additional puzzles which may suggest new 
knowledge representation principles. In addition, we will investigate ways to incorporate inconsistency into CNL systems. This will require introduction of background knowledge into these systems and linguistic cues into the grammar.


\bibliographystyle{acmtrans}
\bibliography{main}

\newpage

\appendix
\section{Jobs Puzzle in \texorpdfstring{$APC_{LP}$}{APCLP} with Inconsistency Injections} \label{jobsPuzzleAPC}

We now present a complete $APC_{LP}$ encoding of Jobs Puzzle and
highlight the principles, introduced in Section~\ref{puzzlePrinciples}, used in the encoding.
We also show several cases of inconsistency injection and discuss the
consequences.
The English sentences are based on the CNL representation of Jobs Puzzle
from Section 3 in \cite{schwitter2013jobs} where ``Steve'' is changed to
``Robin'' for the sake of an example (because Robin can be both a male and a female name).

\begin{enumerate}
    \item[1] Roberta is a person. Thelma is a person. Robin is a person. Pete is a person.
    \item[] $person(roberta) : \TV{t}.~~ person(thelma) :  \TV{t}.~~ person(robin) :  \TV{t}.~~ person(\emph{pete}) :  \TV{t}$.
    \item[2] Roberta is a female. Thelma is a female.
    \item[] $\emph{female}(roberta) :  \TV{t}.~~ \emph{female}(thelma) :  \TV{t}.$
    \item[3] Robin is male. Pete is male.
    \item[] $male(robin) :  \TV{t}.~~ male(\emph{pete}) :  \TV{t}.$
\end{enumerate}

Sentence 4 is encoded based on Principle \ref{p_mutual_exclusion}, which treats $male$ and $\emph{female}$ as polar facts.
\begin{enumerate}
\item[4] Exclude that a person is male and that the person is female.
\item[] $male(X) : \TV{t} ~\vee ~ \emph{female}(X) : \TV{t}~\leftarrow ~ person(X) : \TV{t}.$
\item[] $male(X) : \TV{f} ~\vee ~ \emph{female}(X) : \TV{f}~\leftarrow ~ person(X) : \TV{t}.$
\item[] $\emph{female}(X) : \top ~\leftarrow ~ person(X) : \TV{t} ~\wedge ~ male(X) : \top.$
\item[] $male(X) : \top ~\leftarrow ~ person(X) : \TV{t} ~\wedge ~ \emph{female}(X) : \top.$
\end{enumerate}

We encode sentences 5 and 6 using Principle \ref{p_constraints}, which
constrains the cardinality of $hold(X,\allowbreak Y)$. This will cause
second rule in
Sentence 5 to be repeated as part of encoding of Sentence 6, so we omit the
duplicate.
\begin{enumerate}
\item[5] If there is a job then exactly one person holds that job.
\item[] $1\ \{hold(X,Y) : \TV{t}~~~~ if ~~~~ person(X) : \TV{t}\}\ 1~\leftarrow ~ job(Y) : \TV{t}.$
\item[] $hold(X,Y) : \TV{f} ~\leftarrow ~ person(X) : \TV{t} ~\wedge ~ job(Y) : \TV{t} ~\wedge ~ \neg hold(X,Y) : \TV{t}.$
\item[6] If there is a person then the person holds exactly two jobs.
\item[] $2\ \{hold(X,Y) : \TV{t}~~~~ if ~~~~ job(Y) : \TV{t}\}\ 2~\leftarrow ~ person(X) : \TV{t}.$
\end{enumerate}
Encoding of the following facts is straightforward:
\begin{enumerate}
\item[7] Chef is a job. Guard is a job. Nurse is a job. Operator is a job. Police is a job. Teacher is a job. Actor is a job. Boxer is a job.
\item[] $job(\emph{chef}) : \TV{t}.~~ job(guard) : \TV{t}.~~ job(nurse) : \TV{t}.~~ job(\emph{operator}) : \TV{t}.$
\item[] $job(police) : \TV{t}.~~ job(teacher) : \TV{t}.~~ job(actor) : \TV{t}.~~ job(boxer) : \TV{t}.$
\end{enumerate}

Sentences 8-13 are encoded based on Principles \ref{p_contrapositive} and
\ref{p_propagation_inconsistency}, where contrapositive
inference and propagation of inconsistency are allowed for some
literals but not others.
Notice that it is undesirable to allow propagation of inconsistency
from \emph{person}-facts and \emph{job}-facts, since it is
unreasonable to conclude that somebody is, say, a male while being unsure
that this somebody is a person. Ditto about the jobs.
Contrapositive reasoning (say, from non-male to non-person) is also
inappropriate here because we have higher confidence in someone being a person.
 So, we use ontological implication $\leftarrow$ in the
next group of rules. 
\begin{enumerate}
\item[8] If a person holds a job as a nurse then that person is a male.
\item[] $(male(X) : \TV{t} ~\epiarrow ~ hold(X,nurse) : \TV{t})~\leftarrow
  ~ person(X) : \TV{t} ~\wedge ~ job(nurse) : \TV{t} ~\wedge ~$ \par $\quad
  \quad \quad \quad \quad \quad \quad \quad \quad \quad \quad \quad \quad
  \quad \qquad~$ $\neg person(X) : \top ~\wedge ~ \neg job(nurse) : \top.$
\item[9] If a person holds a job as an actor then that person is a male.
\item[] $(male(X) : \TV{t} ~\epiarrow ~ hold(X,actor) : \TV{t})~\leftarrow
  ~ person(X) : \TV{t} ~\wedge ~ job(actor) : \TV{t} ~\wedge ~$ \par $~~
  \qquad \qquad \qquad \qquad \qquad \qquad \qquad \qquad  \neg person(X) : \top ~\wedge ~ \neg job(actor) : \top.$
\item[10] If a first person holds a job as a chef and a second person holds
  a job as a telephone operator then the second person is a husband of the
  first person.
\item[] $(husband(Y,X) : \TV{t} ~\epiarrow ~ hold(X,\emph{chef}) : \TV{t} ~\wedge ~
  hold(Y,\emph{operator}) : \TV{t}) ~\leftarrow ~$\par $ \quad \qquad \quad person(X) : \TV{t} ~\wedge ~ job(\emph{chef}) : \TV{t} ~\wedge ~ person(Y) : \TV{t} ~\wedge ~ job(\emph{operator}) : \TV{t} ~\wedge ~$\par $ \qquad \qquad \neg person(X) : \top ~\wedge ~ \neg job(\emph{chef}) : \top ~\wedge ~ \neg person(Y) : ~\wedge ~ \neg job(\emph{operator}) : \top.$
\item[11] If a first person is a husband of a second person then the first person is male.
\item[] $(male(X) : \TV{t} ~\epiarrow ~ husband(X,Y) : \TV{t})~\leftarrow ~ person(X) : \TV{t} ~\wedge ~ person(Y) : \TV{t} ~\wedge ~$\par $~~~~ \quad \qquad \qquad \qquad \qquad \qquad \qquad \quad \quad~~ \neg person(X) : \top ~\wedge ~ \neg person(Y) : \top.$
\item[12] If a first person is a husband of a second person then the second person is female.
\item[] $(\emph{female}(Y) : \TV{t} ~\epiarrow ~ husband(X,Y) : \TV{t})~\leftarrow ~ person(X) : \TV{t} ~\wedge ~ person(Y) : \TV{t} ~\wedge ~$\par $\quad \qquad \qquad \qquad \qquad \qquad \qquad \qquad \quad \quad~~ \neg person(X) : \top ~\wedge ~ \neg person(Y) : \top.$
\item[13] Exclude that Roberta holds a job as boxer.
\item[] $hold(roberta,boxer) : \TV{f}~\leftarrow ~ job(boxer) :  \TV{t} ~\wedge ~ \neg job(boxer) : \top.$
\end{enumerate}
Encoding of the following fact is straightforward.
\begin{enumerate}
\item[14] Exclude that Pete is educated.
\item[] $\emph{educated}(\emph{pete}) : \TV{f}.$
\end{enumerate}

\noindent
Sentences 15-20 are also encoded based on Principles \ref{p_contrapositive}
and  \ref{p_propagation_inconsistency}.
\begin{enumerate}
\item[15] If a person holds a job as nurse then the person is educated.
\item[] $(\emph{educated}(X) : \TV{t} ~\epiarrow ~ hold(X,nurse) :
  \TV{t})~\leftarrow ~ person(X) : \TV{t} ~\wedge ~ job(nurse) : \TV{t}
  ~\wedge ~$ \par $~~~ \quad \qquad \qquad \qquad \qquad \qquad \qquad
  \qquad \quad \quad~~ \neg person(X) : \top ~\wedge ~ \neg job(nurse) : \top.$
\item[16] If a person holds a job as a police officer then that person is educated.
\item[] $(\emph{educated}(X) : \TV{t} ~\epiarrow ~ hold(X,police) :
  \TV{t})~\leftarrow ~ person(X) : \TV{t} ~\wedge ~ job(police) : \TV{t}
  ~\wedge ~$\par $\quad \quad \qquad \qquad \qquad \qquad \qquad \qquad
  \qquad \quad \quad~~ \neg person(X) : \top ~\wedge ~ \neg job(police) : \top.$
\item[17] If a person holds a job as a teacher then the person is educated.
\item[] $(\emph{educated}(X) : \TV{t} ~\epiarrow ~ hold(X,teacher) : \TV{t})~\leftarrow ~ person(X) : \TV{t} ~\wedge ~ job(teacher) : \TV{t} ~\wedge ~$\par $~~ \quad \qquad \qquad \qquad \qquad \qquad \qquad \qquad \qquad \quad ~~ \neg person(X) : \top ~\wedge ~ \neg job(teacher) : \top.$
\item[18] Exclude that Roberta holds a job as a chef.
\item[] $hold(roberta,\emph{chef}) : \TV{f} ~\leftarrow ~ job(\emph{chef}) : \TV{t} ~\wedge ~ \neg job(\emph{chef}) : \top.$
\item[19] Exclude that Roberta holds a job as a police officer.
\item[] $hold(roberta,police) : \TV{f}~\leftarrow ~ job(police) : \TV{t} ~\wedge ~ \neg job(police) : \top.$
\item[20] Exclude that a person holds a job as a chef and that the same person holds a job as a police officer.
\item[] $hold(X,\emph{chef}) : \TV{f} ~\vee ~ hold(X,police) : \TV{f}~\leftarrow ~ person(X) : \TV{t} ~\wedge ~ job(\emph{chef}) : \TV{t} ~\wedge ~$\par $~~ \quad \qquad \qquad \qquad \qquad \qquad \qquad \qquad \qquad job(police) : \TV{t} ~\wedge ~ \neg person(X) : \top ~\wedge ~$\par $~~ \quad \qquad \qquad \qquad \qquad \qquad \qquad \qquad \qquad \neg job(\emph{chef}) : \top ~\wedge ~ \neg job(police) : \top.$
\end{enumerate}

Next we define the consistency preference relation $<_{\cS}$,
where $\cS=(s_{1},s_{2},s_3,\cB_{\top})$, which implements Principle 4.
Here $s_1$ says that we hold greater confidence in the information about
someone being a person and something being a job than in any other kind of
information in the puzzle. That is, these facts are least likely to be inconsistent. 
Next, $s_2$ says that we are very likely to
believe that Pete is a male name and Thelma and Roberta are female names.
We are not sure about Robin, so s/he is left out in $s_2$.
The set $s_3$ says that next we are likely to believe the information on
who holds which jobs.
The last component in $\cS$, $\cB_\top$, is the usual default that prefers the most
e-consistent models.
\begin{enumerate}
\item[] $s_{1}=\{person(roberta) : \top,~~ person(thelma) : \top,~~ person(robin) : \top,~~ person(\emph{pete}) : \top,$ \par 
$\qquad ~~ job(\emph{chef}) : \top, job(guard) : \top, job(nurse) : \top, job(\emph{operator}) : \top,$ \par
$\qquad ~~ job(police) : \top, job(teacher) : \top, job(actor) : \top, job(\allowbreak boxer) : \top \}$
\item[]
\item[]  $s_{2}=\{male(\emph{pete}) : \top, ~~ \emph{female}(thelma) : \top, ~~ \emph{female}(roberta) : \top\}$
\item[] $s_{3}=\{hold(\emph{pete},\emph{chef}) : \top, \quad hold(\emph{pete},guard) : \top, \quad hold(\emph{pete},nurse) : \top, ~~ hold(\emph{pete},$ \par $\qquad ~~ \emph{operator}) : \top, ~~~~ hold(\emph{pete}, police) : \top, ~~~~ hold(\emph{pete},teacher) : \top, ~~~~ hold(\emph{pete},$ \par $\qquad ~~ actor) : \top, ~~ hold(\emph{pete},boxer) : \top, ~~ \ldots ~~ \}$ 
\end{enumerate}
It is interesting to note that if $s_3$ is not included then, in some
cases, there might be too many possibilities to solve the puzzle by allowing
\emph{hold}-predicates to be inconsistent. This is because many 
rules and constraints in the puzzle use \emph{hold} as a premise. So, without
minimizing the possibility of inconsistency in \emph{hold} those rules and
constraints become vacuously true, leading to an explosion of the possible
worlds.

The $APC_{LP}$ encoding generates one most consistency preferred model where the information concerning $hold/2$ is
\begin{enumerate}
\item[] $hold(\emph{pete},actor) : \TV{t}$ ~~~~~~~~~~~~~~~ $hold(\emph{pete},\emph{operator}) : \TV{t}$ 
~~~~~~~~~ $hold(robin,police) : \TV{t}$ 
\item[] $hold(robin,nurse) : \TV{t}$ ~~~~~~~~~~~~~ $hold(thelma,\emph{chef}) : \TV{t}$
~~~~~~~~~~~~ $hold(thelma,boxer) : \TV{t}$
\item[] $hold(roberta,guard) : \TV{t}$ ~~~~~~~~~ $hold(roberta,teacher) : \TV{t}$
\end{enumerate}
Not surprisingly,
this solution matches other approaches because so far we have not injected
inconsistency (and so, for example, the preferences $s_1$, $s_2$, and $s_3$
do not matter here).

Next, we illustrate five cases of injection of inconsistency into the
puzzle. Since complete models tend to be rather large, we show only
$hold/2$ and $\top$-predicates.
\begin{enumerate}
\item[21] Thelma is an actor --- Variation 1
\item[] $hold(thelma,actor) : \TV{t}$.
\end{enumerate}
This makes Thelma's job assignment as an actor inconsistent and we get this
model:
\begin{enumerate}
\item[] $m = \{hold(\emph{pete},boxer) : \TV{t} ~~~~~~~~~~~hold(\emph{pete},\emph{operator}) : \TV{t} ~~~~~~~~~~~hold(robin,police) : \TV{t}$ 
\item[] $\qquad ~~ hold(robin,nurse) : \TV{t} ~~~~~~~~~~hold(thelma,actor) : \top  ~~~~~~~~~~~hold(thelma,\emph{chef}) : \TV{t}$ 
\item[] $\qquad ~~ hold(roberta,teacher) : \TV{t} ~~~~~~~hold(roberta,guard) : \TV{t}\}$
\end{enumerate}
Indeed, the original puzzle implies that Thelma is \emph{not} an actor. Given that
$hold(thelma,actor) : \TV{t}$ is true, Sentence 9 will sanction two
possibilities: one where $male(thelma) : \TV{t}$ is
true and the other where $hold(thelma,actor) : \top$ is true. In the first
case, Sentence 4 will force $male(thelma) : \top$ and
$\emph{female}(thelma) : \top$ into the model. In the second case, we will have
$hold(thelma,actor) : \top$ in the model. Given that we have high confidence in Thelma's gender (preference $s_2$), we hold her gender
less likely to be inconsistent. Thus, the first case gets
eliminated.

\medskip

The next variation assumes that Robin is a female (instead of Thelma being
an actor in Variation 1).
\begin{enumerate}
\item[22] Robin is female --- Variation 2
\item[] $\emph{female}(robin) : \TV{t}$.
\end{enumerate}
There are two models:
\begin{enumerate}
\item[] $m_{1} = \{hold(\emph{pete},actor) : \TV{t} ~~ hold(\emph{pete},\emph{operator}) : \TV{t} ~~ hold(robin,police) : \TV{t}$
\item[] $\qquad ~~~ hold(robin,nurse) : \TV{t} ~~ hold(roberta,teacher) : \TV{t} ~~ hold(thelma,boxer) : \TV{t}$
\item[] $\qquad ~~~  hold(roberta,guard) : \TV{t} ~~ hold(thelma,\emph{chef}) : \TV{t}  ~~ husband(\emph{pete},thelma) : \TV{t}$ 
\item[] $\qquad ~~~ male(robin) : \top ~~ \emph{female}(robin) : \top\}$
\end{enumerate}
\begin{enumerate}
\item[] $m_{2} = \{hold(\emph{pete},actor) : \TV{t} ~~ hold(\emph{pete},\emph{operator}) : \TV{t} ~~ hold(robin,nurse) : \TV{t}$
\item[] $\qquad ~~~ hold(robin,\emph{chef}) : \TV{t} ~~ hold(thelma,boxer) : \TV{t} ~~ hold(thelma,police) : \TV{t}$ 
\item[] $\qquad ~~~ hold(roberta,guard) : \TV{t} ~~ hold(roberta,teacher) : \TV{t} ~~ husband(\emph{pete},robin) : \TV{t}$
\item[] $\qquad ~~~ \emph{female}(robin) : \top ~~ male(robin) : \top\}$
\end{enumerate}
Sentences 3 and 4 imply inconsistency regarding Robin's gender. The
first model is the same as in the original puzzle (as far as the job
assignments go).
In the second model, since Robin's gender is inconsistent
(both male and female),
it is compatible to make Robin a chef and Thelma a police
officer. Therefore, we derive that Pete is Robin's husband instead of
Thelma's.

\medskip

The next variation of the original puzzle explicitly assumes that Robin is Thelma's
husband.
\begin{enumerate}
\item[23] Robin is a husband of Thelma --- Variation 3
\item[] $husband(robin,thelma) : \TV{t}$.
\end{enumerate}
Here we need to add more background knowledge about marital relations.
For instance, that
every person can marry or be married to at most one person.
Together with Sentence 23, this
will cause inconsistency because the original
puzzle implies that Pete is Thelma's husband.
In \cite{schwitter2013jobs}, this implicit knowledge is not
stated, so it will fail to detect inconsistency. If such
background knowledge were added to Schwitter's formulation as constraints then there would
be no models.
The background knowledge we need is:
\begin{enumerate}
\item[24] A person who is a male is a husband of exactly one other person,
  or that person is null.
\item[25] A person who is a female has exactly one husband or that husband
  is null.
\item[26] Exclude that person X is a husband of Y and person Z is a husband
  of X simultaneously.
\item[27] If it is not derivable that person X is person Y's husband, then
  X is not Y's husband.
\end{enumerate}
Sentences 24 -- 26 are cardinality constraints and are encoded based on
Principle \ref{p_constraints}. 
Sentence 27 says that the information about
husbands is complete; it
is encoded based on Principle \ref{p_ck}.
Also, we block propagation of inconsistency from male and female based on Principle 2.
\begin{enumerate}
\item[] $1 ~ \{husband(X,Y) : \TV{t} ~~~~ if ~~~~ (person(Y) : \TV{t} ~ or
  ~ Y = null)\} ~1 ~\leftarrow $
  \par
  \qquad \qquad \qquad \qquad \qquad \qquad 
  $~ person(X) : \TV{t} ~\wedge~ male(X) : \TV{t} ~\wedge~ \neg male(X) : \top.$
\item[] $1 ~ \{husband(X,Y) : \TV{t} ~~~~ if ~~~~ (person(X) : \TV{t} ~ or
  ~ X = null)\} ~1 ~\leftarrow $
  \par
  \qquad \qquad \qquad \qquad \qquad \qquad 
  $~ person(Y) : \TV{t} ~\wedge~ \emph{female}(Y) : \TV{t} ~\wedge~ \neg \emph{female}(Y) : \top.$
\item[] ${:}{-} ~~ husband(X,Y) : \TV{t} ~\wedge~ husband(Z,X) : \TV{t} ~\wedge~ X ~ != ~ null.$
\item[] $husband(X,Y) : \TV{f} ~\leftarrow~ person(X) : \TV{t} ~\wedge~ person(Y) : \TV ~\wedge~ \neg husband(X,Y) : \TV{t}.$
\end{enumerate}

There are six models. When these models are projected on $hold/2$ (which
constitutes the solution to the puzzle) 
and the  $\top$-predicates, we get three distinct sets:
\begin{enumerate}
\item[] $m_{1} =\{ hold(\emph{pete},actor) : \TV{t} ~~~~~~~~~~ hold(\emph{pete},police) : \TV{t} ~~~~ hold(robin,\emph{operator}) : \TV{t}$
\item[] $\qquad ~~~ hold(robin,nurse) : \TV{t} ~~~~~ hold(roberta,teacher) : \TV{t} ~~ hold(roberta,guard) : \TV{t}$
\item[] $\qquad ~~~ hold(thelma,boxer) : \TV{t} ~~~~~~ hold(thelma,\emph{chef}) : \TV{t}  ~~~ \emph{educated}(\emph{pete}) : \top\}$ 
\end{enumerate}

\begin{enumerate}
\item[] $m_{2} =\{ hold(\emph{pete},actor) : \TV{t} ~~~~~ hold(\emph{pete},nurse) : \TV{t} ~~~~~ hold(robin,\emph{operator}) : \TV{t}$
\item[] $\qquad ~~~ hold(robin,police) : \TV{t} ~~~~~ hold(roberta,teacher) : \TV{t} ~~ hold(roberta,guard) : \TV{t}$
\item[] $\qquad ~~~ hold(thelma,boxer) : \TV{t} ~~~~~ hold(thelma,\emph{chef}) : \TV{t}  ~~~ \emph{educated}(\emph{pete}) : \top\}$ 
\end{enumerate}

\begin{enumerate}
\item[] $m_{3} =\{ hold(\emph{pete},police) : \TV{t} ~~~~~ hold(\emph{pete},nurse) : \TV{t} ~~~~~ hold(robin,\emph{operator}) : \TV{t}$
\item[] $\qquad ~~~ hold(robin,actor) : \TV{t} ~~~~~ hold(roberta,teacher)
  : \TV{t} ~~~~ hold(roberta,guard) : \TV{t}$
\item[] $\qquad ~~~ hold(thelma,boxer) : \TV{t} ~~~~~ hold(thelma,\emph{chef}) :
  \TV{t}  ~~~~ \emph{educated}(\emph{pete}) : \top\}$ 
\end{enumerate}

The puzzle originally implied that Pete is Thelma's husband. Since we now
explicitly stated that Robin is Thelma's husband, Sentences 25 and 27 will
force $husband(\emph{pete}, thelma) : \TV{f}$ to hold. By Sentence 10,
$hold(\emph{pete},\emph{operator}) : \TV{t}$ and $hold(Thelma,\emph{chef}) : \TV{t}$ cannot hold
simultaneously, so many solutions with inconsistencies in them will be
generated.
Due to the consistency preference relations, the $APC_{LP}$ encoding
will prefer the models where $\emph{educated}(\emph{pete}) : \top$  holds.

Sentences 24 and 25 sanction two possibilities for the
husband information in
each of the above models. For instance, the model $m_{1}$ corresponds to two
models out of the six models that we get; they differ only in their husband
information. Given that Pete is not an operator and Roberta is not a chef,
Pete is not necessarily Roberta's husband. Therefore, there are two
cases: one where $husband(\emph{pete},\allowbreak roberta) : \TV{t}$ holds and the
other where $husband(\emph{pete},null) : \TV{t}$ and $husband(null,roberta) :
\TV{t}$ hold. Similar considerations apply to $m_{2}$ and $m_{3}$.

The next variation applies the background knowledge about husbands from
Variation 3 to Variations 1 and 2.

\begin{enumerate}
\item[28] Thelma is an actor. --- Variation 4.1 (modification of Variation 1)
\item[] $hold(thelma,actor) : \TV{t}$.
\item[] Sentences 24 -- 27.
\end{enumerate}
There are now two models, and their projections on $hold$- and
$\top$-predicates are compatible with the solution to Variation 1. The only
difference between these two models is in \emph{husband}-predicates, so we
show only that part.
\begin{enumerate}
\item[] $m_{1} = \{husband(\emph{pete},thelma) : \TV{t}, ~~ husband(robin,roberta) : \TV{t}\}$ 
\item[] $m_{2} = \{husband(\emph{pete},thelma) : \TV{t}, ~~ husband(robin,null) : \TV{t}, ~~ husband(null,roberta) : \TV{t}\}$
\end{enumerate}
\medskip
\begin{enumerate}
\item[29] Robin is a female. --- Variation 4.2 (modification of Variation 2)
\item[] $\emph{female}(robin) : \TV{t}$.
\item[] Sentences 24 -- 27.
\end{enumerate}
There are three models and their projections on \emph{hold}-  and
$\top$-predicates are compatible with Variation 2. The only difference
with Variation 2 is in the $husband$-predicates, which we show:
\begin{enumerate}
\item[] $m_{1} =\{husband(\emph{pete},thelma) : \TV{t}, ~~ husband(robin,roberta) : \TV{t}\}$
\item[] $m_{2} = \{husband(\emph{pete},thelma) : \TV{t}, ~~ husband(robin,null) : \TV{t}, ~~ husband(null,roberta) : \TV{t}\}$
\item[] $m_{3} = \{husband(\emph{pete},robin) : \TV{t}, ~~ husband(null,thelma) : \TV{t}, ~~ husband(null,roberta) : \TV{t}\}$
\end{enumerate}
Here $m_{1}$ and $m_{2}$ correspond to the first model in Variation 2 and
$m_{3}$ corresponds to the second model there.


\section{Zebra Puzzle in \texorpdfstring{$APC_{LP}$}{APCLP} with Inconsistency Injections} \label{zebraPuzzleAPC}

We now present a complete $APC_{LP}$ encoding of the original Zebra Puzzle as is described in Wikipedia.\footnote{\url{https://en.wikipedia.org/wiki/Zebra_Puzzle}} 
A slightly different version of the puzzle appears in TPTP.\footnote{\url{http://www.cs.miami.edu/~tptp/cgi-bin/SeeTPTP?Category=Problems&Domain=PUZ&File=PUZ010-1.p}}
The encoding highlights the principles introduced in Section~\ref{puzzlePrinciples} and we also discuss several cases of inconsistency injection.

\begin{enumerate}[label=(\alph*)]
\item There are five houses.
\item The Englishman lives in the red house.
\item The Spaniard owns the dog.
\item Coffee is drunk in the green house.
\item The Ukrainian drinks tea.
\item The green house is immediately to the right of the ivory house.
\item The Old Gold smoker owns snails.
\item Kools are smoked in the yellow house.
\item Milk is drunk in the middle house.
\item The Norwegian lives in the first house.
\item The man who smokes Chesterfields lives in the house next to the man with the fox.
\item Kools are smoked in the house next to the house where the horse is kept.
\item The Lucky Strike smoker drinks orange juice.
\item The Japanese smokes Parliaments.
\item The Norwegian lives next to the blue house.
\item[] Now, who drinks water? Who owns the zebra?
\item[] In the interest of clarity, it must be added that each of the five houses is painted a different color, and their inhabitants are of different national extractions, own different pets, drink different beverages and smoke different brands of American cigarettes [sic]. One other thing: in statement 6, right means your right.
\item[] \qquad \qquad \qquad \qquad \qquad \qquad \qquad \qquad \qquad --- Life International, December 17, 1962
\end{enumerate}

Zebra Puzzle implies some background knowledge, which we must add.  First,
there is a one-to-one correspondence between houses and colors (and
persons, drinks, cigarettes, and pets).  Second, we assume that
the first house stands on the
extreme left and the fifth house stands on the extreme right. The
first and the fifth house are not next to each other. A house is 
on the right of another house if the latter is to the left of the former.
Right-to and left-to imply next-to and next-to is a symmetric relation.
Next, we list the facts and the rules that encode the puzzle. For clarity
statement is related to an appropriate English statement from the puzzle or
from the implicit information mentioned above.

Sentences 1-5 provide the \emph{house-}, \emph{color-}, \emph{person-}, \emph{cigarette-}, and \emph{pet-}facts.
\begin{enumerate}
    \item[1] There are five houses:  \#1, \#2, \#3, \#4, and \#5.
    \item[] $house(1) : \TV{t}.~~~~ house(2) : \TV{t}. ~~~~ house(3) : \TV{t}. ~~~~ house(4) : \TV{t}. ~~~~ house(5) : \TV{t}.$

    \item[2] There are five colors: yellow, blue, red, ivory, and green.
    \item[] $color(yellow) : \TV{t}.~~~~ color(blue) : \TV{t}. ~~~~ 
color(red) : \TV{t}. ~~~~ color(ivory) : \TV{t}.~~~~ color(green) : \TV{t}.$
    
    \item[3] There are five people: Norwegian, Ukrainian, Englishman, Spaniard, and Japanese.
    \item[] $person(norwegian) : \TV{t}. ~~~~ person(ukrainian) : \TV{t}. ~~~~ person(englishman) : \TV{t}.$
    \item[] $person(spaniard) : \TV{t}. ~~~~ person(japanese) : \TV{t}.$
    
    \item[4] There are five drinks: water, tea, milk, orange juice, and coffee.
    \item[] $drink(\emph{water}) : \TV{t}. ~~~~ drink(tea) : \TV{t}. ~~~~ drink(milk) : \TV{t}. ~~~~ drink(orange\_juice) : \TV{t}.$ \par $drink(\emph{coffee}) : \TV{t}.$

    \item[5] There are five cigarettes: Kools, Chesterfield, Old Gold, Luck Strike, and Parliament.
    \item[] $cigarette(kools) : \TV{t}.~~~ cigarette(\emph{chesterfield}) : \TV{t}. ~~~ cigarette(old\_gold) : \TV{t}.$
    \item[] $cigarette(lucky\_strike) : \TV{t}. ~~~ cigarette(parliament) : \TV{t}.$

    \item[6] There are five pets: fox, horse, snails, dog, and zebra.
    \item[] $pet(\emph{fox}) : \TV{t}. ~~~~ pet(horse) : \TV{t}. ~~~~ pet(snails) : \TV{t}.$
    ~~~~ $pet(dog) : \TV{t}. ~~~~ pet(zebra) : \TV{t}.$
\end{enumerate}

Sentences 7-16 describe the implicit knowledge about the one-to-one
correspondence between houses, colors, persons, drinks, cigarettes, and
pets.  They are encoded as the cardinality constraints on the
predicates~ $house\_color(H,C)$, $house\_\emph{nationality}(H,N)$,
$house\_drink(H,\allowbreak D)$, $house\_smoke(H,C)$, and
$house\_pet(H,P)$. We encode Sentences 7 - 8 using Principles \ref{p_ck} and
\ref{p_constraints}. This will cause the second rule in Sentence 8 to be
repeated as part of the encoding of Sentence 7, so we omit the
duplicate. The rest of the sentences follow the same idea.
\begin{enumerate}
\item[7] Each house has exactly one color.
\item[] $1\{house\_color(H,C) : \TV{t} ~~if~~ color(C) : \TV{t}\}1 ~\leftarrow~ house(H) : \TV{t}.$
\item[] $house\_color(H,C) : \TV{f} ~\leftarrow~ house(H) : \TV{t} ~\wedge~ color(C) : \TV{t} ~\wedge~ \neg house\_color(H,C) : \TV{t}.$

\item[8] Each color is for exactly one house.
\item[] $1\{house\_color(H,C) : \TV{t} ~~if~~ house(H) : \TV{t}\}1 ~\leftarrow~  
color(C) : \TV{t}.$

\item[9] Each house is home for exactly one person.
\item[] $1\{house\_\emph{nationality}(H,N) : \TV{t} ~~if~~ person(N) : \TV{t}\}1 ~\leftarrow~ house(H) : \TV{t}.$
\item[] $house\_\emph{nationality}(H,N) : \TV{f} ~\leftarrow~ house(H) : \TV{t} ~\wedge~ person(N) : \TV{t} ~\wedge~$ \par $\qquad \qquad \qquad \qquad \qquad \qquad ~~~ \neg house\_\emph{nationality}(H,N) : \TV{t}.$
\item[10] Each person lives in exactly one house.
\item[] $1\{house\_\emph{nationality}(H,N) : \TV{t} ~~if~~ house(H) : \TV{t}\}1 ~\leftarrow~  person(N) : \TV{t}.$

\item[11] Each house has exactly one favorite drink.
\item[] $1\{house\_drink(H,D) : \TV{t} ~~if~~ drink(D) : \TV{t}\}1 ~\leftarrow~ house(H) : \TV{t}.$
\item[] $house\_drink(H,D) : \TV{f} ~\leftarrow~ house(H) : \TV{t} ~\wedge~ drink(D) : \TV{t} ~\wedge~ \neg house\_drink(H,D) : \TV{t}.$
\item[12] Each drink is drunk in exactly one house.
\item[] $1\{house\_drink(H,D) : \TV{t} ~~if~~ house(H) : \TV{t}\}1 ~\leftarrow~  drink(D) : \TV{t}.$

\item[13] Each house has exactly one brand of cigarettes.
\item[] $1\{house\_smoke(H,S) : \TV{t} ~~if~~ cigarette(S) : \TV{t}\}1 ~\leftarrow~ house(H) : \TV{t}.$
\item[] $house\_smoke(H,S) : \TV{f} ~\leftarrow~ house(H) : \TV{t} ~\wedge~ cigarette(S) : \TV{t} ~\wedge~ \neg house\_smoke(H,S) : \TV{t}.$
\item[14] Each brand of cigarettes is smoked in exactly one house.
\item[] $1\{house\_smoke(H,S) : \TV{t} ~~if~~ house(H) : \TV{t}\}1 ~\leftarrow~  cigarette(S) : \TV{t}.$

\item[15] Each house has exactly one pet.
\item[] $1\{house\_pet(H,P) : \TV{t} ~~if~~ pet(P) : \TV{t}\}1 ~\leftarrow~ house(H) : \TV{t}.$
\item[] $house\_pet(H,P) : \TV{f} ~\leftarrow~ house(H) : \TV{t} ~\wedge~ pet(P) : \TV{t} ~\wedge~ \neg house\_pet(H,P) : \TV{t}.$
\item[16] Each pet is kept in exactly one house.
\item[] $1\{house\_pet(H,P) : \TV{t} ~~if~~ house(H) : \TV{t}\}1 ~\leftarrow~  pet(P) : \TV{t}.$
\end{enumerate}

Sentences 17-20 correspond to Sentences (b)-(e) in the original puzzle. The
encoding is based on Principles \ref{p_contrapositive} and \ref{p_propagation_inconsistency}, where contrapositive
inference and propagation of inconsistency are allowed for some
literals but not others.
Notice that it is undesirable to allow propagation of inconsistency
from \emph{house}-facts, since it is
unreasonable to conclude that somebody lives in a house while being unsure that something is a house.
Contrapositive reasoning is also inappropriate here. For instance, if there
is uncertainty about the color of a house,
it is unreasonable to conclude that something is not a house because one
is much more likely to discern a house than its color.
So,  in the next group of rules,
we separate $house(H)$ from color and other facts using
use ontological implication $\leftarrow$.
\begin{enumerate}
\item[17] If the Englishman lives in a house then the color of the house is red.
\item[] $(house\_color(H,red) : \TV{t} ~\epiarrow~ house\_\emph{nationality}(H,englishman) : \TV{t})$ \par $\qquad \qquad \qquad \qquad \qquad \qquad \qquad \qquad \qquad \qquad  ~\leftarrow~ house(H) : \TV{t} ~\wedge~ \neg house(H) : \top$.
	
\item[18] If the Spaniard lives in a house then dog is kept in the house.
\item[] $(house\_pet(H,dog) : \TV{t} ~\epiarrow~ house\_\emph{nationality}(H,spaniard) : \TV{t})$ \par $\qquad \qquad \qquad \qquad \qquad \qquad \qquad \qquad \qquad ~\leftarrow~ house(H) : \TV{t} ~\wedge~ \neg house(H) : \top.$
	
\item[19] If the color of a house is green then coffee is drunk in the house.
\item[] $(house\_drink(H,\emph{coffee}) : \TV{t} ~\epiarrow~ house\_color(H,green) : \TV{t})$ \par $\qquad \qquad \qquad \qquad \qquad \qquad \qquad \qquad \qquad ~\leftarrow~ house(H) : \TV{t} ~\wedge~ \neg house(H) : \top.$

\item[20] If the Ukrainian lives in a house then tea is drunk in the house.
\item[] $(house\_drink(H,tea) : \TV{t} ~\epiarrow~ house\_\emph{nationality}(H,ukrainian) : \TV{t})$ \par $\qquad \qquad \qquad \qquad \qquad \qquad \qquad \qquad \qquad ~\leftarrow~ house(H) : \TV{t} ~\wedge~ \neg house(H) : \top.$
\end{enumerate}

Sentences 21-22 define the implicit knowledge of the \emph{right(X,Y)}
relation. We encode each sentence based on Principle
\ref{p_propagation_inconsistency}.
\begin{enumerate}
\item[21] A house numbered X is to the right of another house numbered Y if $~$X $-$ 1 = Y.
\item[] $right(X,Y) : \TV{t} ~\leftarrow~ house(X) : \TV{t} ~\wedge~ house(Y) : \TV{t} ~\wedge~ X - 1 ~=~ Y.$ \par $\qquad \qquad \qquad \quad~~ \neg house(X) : \top ~\wedge~ \neg house(Y) : \top.$
\item[22] A house numbered X is not to right of another house numbered Y if $~$X $-$ 1 $\neq$ Y. 
\item[] $right(X,Y) : \TV{f} ~\leftarrow~ house(X) : \TV{t} ~\wedge~ house(Y) : \TV{t} ~\wedge~ X - 1 ~\neq~ Y.$ \par $\qquad \qquad \qquad \quad~~ \neg house(X) : \top ~\wedge~ \neg house(Y) : \top.$\end{enumerate}

Sentences 23-25 correspond to Sentence (f). The encoding of Sentences 23-24
is straightforward. The encoding of Sentence 25 is based on Principles \ref{p_contrapositive} and \ref{p_propagation_inconsistency}.
\begin{enumerate}
\item[23] The color of the first house is not green.
\item[] $house\_color(1,green) : \TV{f}.$
\item[24] The color of the fifth house is not ivory. 
\item[] $house\_color(5,ivory) : \TV{f}.$
\item[25]  If a house is to the right of another house and the color of the former house is green then the color of the latter house is ivory.
\item[] $(house\_color(\emph{Left},ivory) : \TV{t} ~\epiarrow~ house\_color(Right,green) : \TV{t})$ \par $\qquad \qquad \qquad  ~\leftarrow~ house(\emph{Left}) : \TV{t} ~\wedge~ house(Right) : \TV{t} ~\wedge~ right(Right,\emph{Left}) : \TV{t} ~\wedge~$ \par $\qquad \qquad \qquad \qquad \neg house(\emph{Left}) : \top ~\wedge~ \neg house(Right) : \top ~\wedge~ \neg right(Right,\emph{Left}) :  \top.$
\end{enumerate}

Sentences 26-27 below
correspond to Sentences (g)-(h). The encoding is based on Principles
\ref{p_contrapositive} and \ref{p_propagation_inconsistency}.
\begin{enumerate}
\item[26] If Old Gold is smoked in a house then snails are kept in the house.
\item[] $(house\_pet(H,snails) : \TV{t} ~\epiarrow~ house\_smoke(H,old\_gold) : \TV{t})$ \par
$\qquad \qquad \qquad \qquad \qquad \qquad \qquad \qquad \qquad \qquad ~\leftarrow~ house(H) : \TV{t} ~\wedge~ \neg house(H) : \top.$
	
\item[27] If Kools is smoked in a house then the color of the house is yellow.
\item[] $(house\_smoke(H,kools) : \TV{t} ~\epiarrow~ house\_color(H,yellow) : \TV{t})$ \par $\qquad \qquad \qquad \qquad \qquad \qquad \qquad \qquad \qquad \qquad ~\leftarrow~ house(H) : \TV{t} ~\wedge~ \neg house(H) : \top.$
\end{enumerate}

Sentences 28-29 below correspond to Sentences (i)-(j). The encoding is straightforward.
\begin{enumerate}
\item[28] Milk is drunk in the middle house.
\item[] $house\_drink(3,milk) : \TV{t}.$
\item[29] The Norwegian lives in the first house.
\item[] $house\_\emph{nationality}(1,norwegian) : \TV{t}.$
\end{enumerate}

Sentences 30-31 state the implicit knowledge of the \emph{next(X,Y)} relation. We encode each sentence based on Principle \ref{p_propagation_inconsistency}.
\begin{enumerate}
\item[30] A house is next to another house if their house numbers differ by 1 
\item[] $next(X,Y) : \TV{t} ~\leftarrow~ house(X) : \TV{t} ~\wedge~ house(Y) : \TV{t} ~\wedge~ ~ |X - Y| ~=~ 1.$ \par $\qquad \qquad \qquad \quad~ \neg house(X) : \top ~\wedge~ \neg house(Y) : \top.$
\item[31] A house is not next to another house if their house numbers do not differ by 1
\item[] $next(X,Y) : \TV{f} ~\leftarrow~ house(X) : \TV{t} ~\wedge~ house(Y) : \TV{t} ~\wedge~ |X - Y| ~\neq~ 1.$ \par $\qquad \qquad \qquad \quad~ \neg house(X) : \top ~\wedge~ \neg house(Y) : \top.$
\end{enumerate}

Sentences 32-34 correspond to Sentence (k). Sentences 32-33 are
encoded based on Principle \ref{p_contrapositive}. Sentence 34 is encoded
based on Principles \ref{p_contrapositive} and \ref{p_propagation_inconsistency}.
\begin{enumerate}
\item[32] If Chesterfield is smoked in the first house then fox is kept in the second house.
\item[] $house\_pet(2,\emph{fox}) : \TV{t} ~\epiarrow~ house\_smoke(1,\emph{chesterfield}) : \TV{t}.$

\item[33] If Chesterfield is smoked in the fifth house then fox is kept in the fourth house.
\item[] $house\_pet(4,\emph{fox}) : \TV{t} ~\epiarrow~ house\_smoke(5,\emph{chesterfield}) : \TV{t}.$

\item[34] If a house is next to another house and Chesterfield is smoked in the former house then fox is kept in the latter house.
\item[] $(house\_pet(H1,\emph{fox}) : \TV{t} ~\vee~ house\_pet(H3,\emph{fox}) : \TV{t} $ $~\epiarrow~ house\_smoke(H2,\emph{chesterfield}) : \TV{t})$ \par $\qquad \qquad \qquad \qquad ~\leftarrow~ house(H1) : \TV{t} ~\wedge~ house(H2) : \TV{t} ~\wedge~ house(H3) : \TV{t} ~\wedge~$ \par $\qquad \qquad \qquad \qquad \qquad H1 \neq H3 ~\wedge~ next(H1,H2) : \TV{t} ~\wedge~ next(H2,H3) : \TV{t} ~\wedge~ $ \par $\qquad \qquad \qquad \qquad \qquad \neg house(H1) : \top ~\wedge~ \neg house(H2) : \top ~\wedge~ \neg house(H3) : \TV{t} : \top ~\wedge~$ \par $\qquad \qquad \qquad \qquad \qquad \neg next(H1,H2) : \TV{t} : \top ~\wedge~ \neg next(H2,H3) : \top.$\end{enumerate}

Sentences 35-37 correspond to Sentence (l). The encoding follows the same idea as Sentences 32-34.
\begin{enumerate}
\item[35] If Kools are smoked in the first house then horse is kept in the second house
\item[] $house\_pet(2,horse) : \TV{t} ~\epiarrow~ house\_smoke(1,kools) : \TV{t}.$

\item[36] If Kools are smoked in the fifth house then horse is kept in the fourth house.
\item[] $house\_pet(4,horse) : \TV{t} ~\epiarrow~ house\_smoke(5,kools) : \TV{t}.$

\item[37] If a house is next to another house and Kools are smoked in the former house then horse is kept in the latter house.
\item[] $(house\_pet(H1,horse) : \TV{t} ~\vee~ house\_pet(H3,horse) : \TV{t} ~\epiarrow~ house\_smoke(H2,kools) : \TV{t})$ \par $\qquad \qquad \qquad \qquad ~\leftarrow~ house(H1) : \TV{t} ~\wedge~ house(H2) : \TV{t} ~\wedge~ house(H3) : \TV{t} ~\wedge~$ \par $\qquad \qquad \qquad \qquad \qquad H1 \neq H3 ~\wedge~ next(H1,H2) : \TV{t} ~\wedge~ next(H2,H3) : \TV{t} ~\wedge~ $ \par $\qquad \qquad \qquad \qquad \qquad \neg house(H1) : \top ~\wedge~ \neg house(H2) : \top ~\wedge~ \neg house(H3) : \TV{t} : \top ~\wedge~$ \par $\qquad \qquad \qquad \qquad \qquad \neg next(H1,H2) : \TV{t} : \top ~\wedge~ \neg next(H2,H3) : \top.$
\end{enumerate}

Sentences 38-39 below correspond to Sentences (m)-(n). The encoding is based on
Principles \ref{p_contrapositive} and \ref{p_propagation_inconsistency}.
\begin{enumerate}
\item[38] If Lucky Strike is smoked in a house then orange juice is drunk
  in that house.
\item[] $(house\_drink(H,orange\_juice) : \TV{t} ~\epiarrow~ house\_smoke(H,lucky\_strike) : \TV{t})$
\item[] $\qquad \qquad \qquad \qquad \qquad \qquad \qquad \qquad \qquad \qquad ~\leftarrow~ house(H) : \TV{t} ~\wedge~ \neg house(H) : \top.$
\item[39] If the Japanese lives in a house then parliament is smoked in that house.
\item[] $(house\_smoke(H,parliament) : \TV{t} ~\epiarrow~ house\_\emph{nationality}(H,japanese) : \TV{t})$
\item[] $\qquad \qquad \qquad \qquad \qquad \qquad \qquad \qquad \qquad \qquad ~\leftarrow~ house(H) : \TV{t} ~\wedge~ \neg house(H) : \top.$
\end{enumerate}

Sentences 40-42 correspond to Sentence (o). The encoding follows the same idea as Sentences 32-34.
\begin{enumerate}	
\item[40] If the Norwegian lives in the first house then the color of the second house is blue.
\item[] $house\_color(2,blue) : \TV{t} ~\epiarrow~ house\_\emph{nationality}(1,norwegian) : \TV{t}.$

\item[41] If the Norwegian lives in the fifth house then the color of the fourth house is blue.
\item[] $house\_color(4,blue) : \TV{t} ~\epiarrow~ house\_\emph{nationality}(5,norwegian) : \TV{t}.$

\item[42] If a house is next to another house and the Norwegian lives in the former house then the color of the latter house is blue. 
\item[] $(house\_color(H1,blue) : \TV{t} ~\vee~ house\_color(H3,blue) : \TV{t}$ \par $\qquad \qquad \qquad \qquad \qquad \qquad \qquad \qquad \qquad ~\epiarrow~ house\_\emph{nationality}(H2,norwegian) : \TV{t})$ \par $\qquad \qquad \qquad \qquad ~\leftarrow~ house(H1) : \TV{t} ~\wedge~ house(H2) : \TV{t} ~\wedge~ house(H3) : \TV{t} ~\wedge~$ \par $\qquad \qquad \qquad \qquad \qquad H1 \neq H3 ~\wedge~ next(H1,H2) : \TV{t} ~\wedge~ next(H2,H3) : \TV{t} ~\wedge~ $ \par $\qquad \qquad \qquad \qquad \qquad \neg house(H1) : \top ~\wedge~ \neg house(H2) : \top ~\wedge~ \neg house(H3) : \TV{t} : \top ~\wedge~$ \par $\qquad \qquad \qquad \qquad \qquad \neg next(H1,H2) : \TV{t} : \top ~\wedge~ \neg next(H2,H3) : \top.$

\end{enumerate}

Next we define the consistency preference relation $<_{\cS}$,
where $\cS=(s_{1},s_{2},\cB_{\top})$, which implements Principle \ref{p_cpf}.
Here $s_1$ says that we hold greater confidence in the information about
the \emph{house-}, \emph{color-}, \emph{person-}, \emph{cigarette-}, and
\emph{pet-}, \emph{next}, \emph{right-}facts. 
We remind that ``greater confidence" here means that these facts are least likely to be inconsistent.  
The preference $s_2$ says
that next we are likely to be confident in the information given by the $house\_color(H,C)$, $house\_\emph{nationality}(\allowbreak H,N)$, $house\_drink(H,D)$, $house\_smoke(H,\allowbreak C)$, $house\_pet(H,P)$ facts.
The last component in $\cS$, $\cB_\top$, is the usual default that gives
preference to the most
e-consistent models.
\begin{enumerate}
\item[] $s_{1}=\{house(1) : \top,~~ \ldots,~~ house(5) : \top,~~ color(yellow) : \top,~~ \ldots, ~~ color(green) : \top,$ \par 
$\qquad ~~ person(norwegian) : \top,~~ \ldots, ~~person(japanese) : \top,~~ drink(\emph{water}) : \top, ~~\ldots,$ \par $\qquad ~~drink(coffe) : \top, ~~ cigarette(kools) : \top, ~~ \ldots, ~~cigarette(parliament) : \top,$ \par $\qquad ~~pet(\emph{fox}) : \top, ~~\ldots, ~~pet(zebra) : \top \}.$
\item[]  $s_{2}=\{house\_color(1, yellow) : \top, ~~ \ldots, ~~ house\_color(5,green) : \top,$ \par $\qquad ~~ house\_\emph{nationality}(1,norwegian) : \top, ~~\ldots, ~~ house\_\emph{nationality}(5, japanese) : \top,$ \par $\qquad ~~house\_drink(1,\emph{water}) : \top, ~~\ldots, ~~house\_drink(5,coffe) : \top,$ \par $\qquad ~~house\_smoke(1,kools) : \top, ~~\ldots, ~~ house\_smoke(5,parliament) : \top,$ \par $\qquad ~~ house\_pet(1,\emph{fox}) : \top, ~~\ldots, ~~ house\_pet(5,zebra) : \top\}$.
\end{enumerate}

To better illustrate the result, we define a single predicate tuple/6 to represent the combined information of
a house: its associated color, person, drink, cigarette, and pet.
\begin{enumerate}
\item[] $tuple(H,C,N,D,S,P) : \TV{t} ~\leftarrow~ house\_color(H,C) : \TV{t} ~\wedge~ house\_nationality(H,N) : \TV{t} ~\wedge~$ \par $\qquad \qquad \qquad \qquad \qquad \qquad house\_drink(H,D) : \TV{t} ~\wedge~ house\_smoke(H,S) : \TV{t} ~\wedge~$ \par $\qquad \qquad \qquad \qquad \qquad \qquad house\_pet(H,P) : \TV{t}.$
\end{enumerate}

There is a single most consistency preferred model:
\begin{enumerate}
\item[] $tuple(1,~yellow,~norwegian,~\emph{water},~kools,~\emph{fox}) : \TV{t}$
\item[] $tuple(2,~blue,~ukrainian,~tea,~\emph{chesterfield},~horse) : \TV{t}$
\item[] $tuple(3,~red,~englishman,~milk,~old\_gold,~snails) : \TV{t}$
\item[] $tuple(4,~ivory,~spaniard,~orange\_juice,~lucky\_strike,~dog) : \TV{t}$
\item[] $tuple(5,~green,~japanese,~\emph{coffee},~parliament,~zebra) : \TV{t}$\end{enumerate}

Not surprisingly,
this solution matches the usual correct solution
because so far we have not injected
any inconsistency and so, for example, the preferences $s_1$ and $s_2$
play no role.

Next, we illustrate three cases of injection of inconsistency into the
puzzle. Since complete models tend to be rather large, we show only
$tuple/6$ and $\top$-predicates.
\begin{enumerate}
\item[43] The Ukrainian lives in the middle house --- Variation 1
\item[] $house\_\emph{nationality}(3,ukrainian) : \TV{t}$.
\end{enumerate}
There are two models:
\begin{enumerate}
\item[] $m_{1} = \{tuple(1,~yellow,~norwegian,~\emph{water},~kools,~zebra) : \TV{t}$
\item[] $\qquad ~~~ tuple(2,~blue,~japanese,~tea,~parliament,~horse) : \TV{t}$
\item[] $\qquad ~~~ tuple(3,~ivory,~ukrainian,~milk,~old\_gold,~snails) : \TV{t}$
\item[] $\qquad ~~~ tuple(4,~green,~spaniard,~\emph{coffee},~\emph{chesterfield},~dog) : \TV{t}$
\item[] $\qquad ~~~ tuple(5,~red,~englishman,~orange\_juice,~lucky\_strike,~\emph{fox}) : \TV{t}$
\item[] $\qquad ~~~ house\_\emph{nationality}(3,~ukrainian) : \top\}$
\end{enumerate}

\begin{enumerate}
\item[] $m_{2} = \{tuple(1,~yellow,~norwegian,~tea,~kools,~zebra) : \TV{t}$
\item[] $\qquad ~~~ tuple(2,~blue,~japanese,~\emph{water},~parliament,~horse) : \TV{t}$
\item[] $\qquad ~~~ tuple(3,~ivory,~ukrainian,~milk,~old\_gold,~snails) : \TV{t}$
\item[] $\qquad ~~~ tuple(4,~green,~spaniard,~\emph{coffee},~\emph{chesterfield},~dog) : \TV{t}$
\item[] $\qquad ~~~ tuple(5,~red,~englishman,~orange\_juice,~lucky\_strike,~\emph{fox}) : \TV{t}$
\item[] $\qquad ~~~ house\_\emph{nationality}(3,~ukrainian) : \top\}$
\end{enumerate}

The puzzle originally implied that the Ukrainian lives in the second house. Therefore, Variation 1 generates an inconsistency about
the Ukrainian being in the middle house. By Sentence 20, we cannot derive $house\_drink(3,tea) : \TV{t}$ because propagation of inconsistency is blocked in this case. This sanctions two possibilities: one where $house\_drink(1,water) : \TV{t}$ and $house\_drink(2,tea) : \TV{t}$ hold and the other where $house\_drink(1,tea) : \TV{t}$ and $house\_drink(2,water) : \TV{t}$ hold.

\medskip
\begin{enumerate}
\item[44] The Lucky Strike is smoked in the middle house --- Variation 2
\item[] $house\_smoke(3,lucky\_strike) : \TV{t}$.
\end{enumerate}
Again, we have two models:
\begin{enumerate}
\item[] $m_{1} = \{tuple(1,~yellow,~norwegian,~\emph{water},~kools,~\emph{fox}) : \TV{t}$
\item[] $\qquad ~~~ tuple(2,~blue,~ukrainian,~tea,~\emph{chesterfield},~horse) : \TV{t}$
\item[] $\qquad ~~~  tuple(3,~ivory,~spaniard,~milk,~lucky\_strike,~dog) : \TV{t}$
\item[] $\qquad ~~~  tuple(4,~green,~japanese,~\emph{coffee},~parliament,~zebra) : \TV{t}$
\item[] $\qquad ~~~  tuple(5,~red,~englishman,~orange\_juice,~old\_gold,~snails) : \TV{t}$ 
\item[] $\qquad ~~~ house\_smoke(3,~lucky\_strike) : \top\}$
\end{enumerate}
\begin{enumerate}
\item[] $m_{2} = \{tuple(1,~yellow,~norwegian,~orange\_juice,~kools,~\emph{fox}) : \TV{t}$
\item[] $\qquad ~~~ tuple(2,~blue,~ukrainian,~tea,~\emph{chesterfield},~horse) : \TV{t}$
\item[] $\qquad ~~~  tuple(3,~ivory,~spaniard,~milk,~lucky\_strike,~dog) : \TV{t}$
\item[] $\qquad ~~~  tuple(4,~green,~japanese,~\emph{coffee},~parliament,~zebra) : \TV{t}$
\item[] $\qquad ~~~  tuple(5,~red,~englishman,~\emph{water},~old\_gold,~snails) : \TV{t}$ 
\item[] $\qquad ~~~ house\_smoke(3,~lucky\_strike) : \top\}$
\end{enumerate}

The puzzle originally implied that the Lucky Strike is smoked in the fourth
house. As a result, Variation 2 generates an inconsistency regarding the
Lucky Strike being smoked in the middle house. By Sentence 38, we cannot
derive $house\_drink(3,orange\_juice) : \TV{t}$ because propagation of
inconsistency is blocked by the epistemic implication. This sanctions two
possibilities: one where $house\_drink(1,water) : \TV{t}$ and
$house\_drink(5,orange\_juice) : \TV{t}$ hold and the other where
$house\_drink(1,orange\_juice) : \TV{t}$ and $house\_drink(5,water) :
\TV{t}$ hold.

\medskip

\begin{enumerate}
\item[45] Milk is not drunk in the middle house --- Variation 3
\item[] $house\_drink(3,milk) : \TV{f}$.
\end{enumerate}
Variation 3 generates the same model as Zebra Puzzle's original solution except for an additional inconsistent fact $house\_drink(3,milk) : \top$. This is because the puzzle originally implied that milk \emph{is} drunk in the middle house.


\section{Marathon Puzzle in \texorpdfstring{$APC_{LP}$}{APCLP} with Inconsistency Injections} \label{marathonPuzzleAPC}

We present here a complete $APC_{LP}$ encoding of Marathon Puzzle \cite{Guer00,Schwitter12}.
As with the previous puzzles, the encoding highlights the principles of Section~\ref{puzzlePrinciples} and we also discuss several cases of inconsistency injection.

Marathon puzzle is as follows:
\begin{enumerate}[label=(\alph*)]
\item[] Dominique, Ignace, Naren, Olivier, Philippe, and Pascal have arrived as the
first six at the Paris marathon.
\item[] Olivier has not arrived last.
\item[] Dominique, Pascal and Ignace have arrived before Naren and Olivier.
\item[] Dominique who was third last year has improved this year.
\item[] Philippe is among the first four.
\item[] Ignace has arrived neither in second nor third position.
\item[] Pascal has beaten Naren by three positions.
\item[] Neither Ignace nor Dominique are in the fourth position.
\end{enumerate}

The original description implies some implicit background knowledge. First, no runners arrive at the same time (i.e., each runner has a unique arrival position and vice versa). Second, a runner arrives before another runner if the first runner's position number is smaller than the second runner's position. 

There is a unique solution for Marathon Puzzle where Ignace arrives first, followed by Dominique, Pascal, Philippe, Olivier, and Naren, in that order. Next, we show the encoding of Marathon puzzle.

\begin{enumerate}
\item[1] Dominique, Ignace, Naren, Olivier, Philippe, and Pascal have arrived as the first six at the Paris marathon.
\item[] $runner(dominique) : \TV{t}.~~~runner(naren) : \TV{t}.~~~runner(ignace) : \TV{t}.
~~~runner(olivier) : \TV{t}.$
\item[] $runner(\emph{philippe}) : \TV{t}.~~~runner(pascal) : \TV{t}.$
\item[] $\emph{position}(1) : \TV{t}.~~~\emph{position}(2) : \TV{t}.~~~\emph{position}(3) : \TV{t}.
~~~\emph{position}(4) : \TV{t}.~~~\emph{position}(5) : \TV{t}.$
\item[] $\emph{position}(6) : \TV{t}.$
\end{enumerate}

Sentences 2 and 3 are encoded based on Principles \ref{p_ck} and
\ref{p_constraints}. This will cause the second rule in Sentence 3 to be
repeated as part of the encoding of Sentence 2 so, as before, we omit the
duplicate.

\begin{enumerate}
\item[2] Every runner has exactly one position.
\item[] $1 \{ \emph{has\_position}(R,P) : \TV{t}~~~\emph{if}~~~\emph{position}(P) : \TV{t} \} 1 \leftarrow runner(R) : \TV{t}.$
\item[] $\emph{has\_position}(R,P) : \TV{f} \leftarrow runner(R) : \TV{t}~\wedge~\emph{position}(P) : \TV{t}~\wedge~\neg \emph{has\_position}(R,P) : \TV{t}.$
\item[3] Every position belongs to exactly one runner.
\item[] $1 \{ \emph{has\_position}(R,P) : \TV{t}~~~\emph{if}~~~runner(R) : \TV{t} \} 1 \leftarrow \emph{position}(P) : \TV{t}.$
\end{enumerate}	

Encoding of Sentence 4 is straightforward.
\begin{enumerate}
\item[4] Olivier has not arrived last.
\item[] $\emph{has\_position}(olivier,6) : \TV{f}.$
\end{enumerate}

Sentence 5 is encoded by three rules. 
The first and second rules are the result of applying 
 Principle \ref{p_propagation_inconsistency} defining 
the $\emph{before}$-relation based on the position of runners. 
The third rule represents complete knowledge of the $\emph{before}$-relation based on  Principle \ref{p_ck}.

\begin{enumerate}
\item[5] If a runner R1 has a position P1 and another runner R2 has a position P2 and P1 $<$ P2 then R1 is \emph{before} R2.
\item[] $\emph{before}(R1,R2) : \TV{t} \leftarrow runner(R1) : \TV{t}~\wedge~\emph{position}(P1) : \TV{t}~\wedge~\emph{has\_position}(R1,P1) : \TV{t}~\wedge$
\item[]  $\qquad \qquad \qquad \qquad~~~ runner(R2) : \TV{t}~\wedge~\emph{position}(P2) : \TV{t}~\wedge~\emph{has\_position}(R2,P2) : \TV{t}~\wedge~$
\item[]  $\qquad \qquad \qquad \qquad~~~ P1 < P2 ~\wedge~\neg runner(R1) : \top~\wedge~\neg runner(R2) : \top~\wedge~$
\item[] $\qquad \qquad \qquad \qquad~~~ \neg \emph{position}(P1) : \top~\wedge~\neg \emph{position}(P2) : \top~\wedge~$
\item[] $\qquad \qquad \qquad \qquad~~~ \neg \emph{has\_position}(R1,P1) : \top~\wedge~\neg \emph{has\_position}(R2,P2) : \top.$
\item[] $\emph{before}(R2,R1) : \TV{f} \leftarrow runner(R1) : \TV{t}~\wedge~\emph{position}(P1) : \TV{t}~\wedge~\emph{has\_position}(R1,P1) : \TV{t}~\wedge$
\item[]  $\qquad \qquad \qquad \qquad~~~ runner(R2) : \TV{t}~\wedge~\emph{position}(P2) : \TV{t}~\wedge~\emph{has\_position}(R2,P2) : \TV{t}~\wedge~$
\item[]  $\qquad \qquad \qquad \qquad~~~ P1 < P2 ~\wedge~\neg runner(R1) : \top~\wedge~\neg runner(R2) : \top~\wedge~$
\item[] $\qquad \qquad \qquad \qquad~~~ \neg \emph{position}(P1) : \top~\wedge~\neg \emph{position}(P2) : \top~\wedge~$
\item[] $\qquad \qquad \qquad \qquad~~~ \neg \emph{has\_position}(R1,P1) : \top~\wedge~\neg \emph{has\_position}(R2,P2) : \top.$
\item[]  $\emph{before}(R1,R2) : \TV{f} \leftarrow runner(R1) : \TV{t}~\wedge~runner(R2) : \TV{t}~\wedge~\neg \emph{before}(R1,R2) : \TV{t}.$
\end{enumerate}

Encoding of Sentences 6 - 9 is straightforward.
\begin{enumerate}
\item[6] Dominique, Pascal and Ignace have arrived before Naren and Olivier.
\item[] $\emph{before}(dominique,naren) : \TV{t}.~~~\emph{before}(pascal,naren) : \TV{t}.~~~\emph{before}(ignace,naren) : \TV{t}.$
\item[] $\emph{before}(dominique,olivier) : \TV{t}.~~~\emph{before}(pascal,olivier) : \TV{t}.~~~\emph{before}(ignace,olivier) : \TV{t}.$
\item[7] Dominique who was third last year has improved this year.
\item[] $\emph{has\_position}(dominique,1) : \TV{t}~\vee~\emph{has\_position}(dominique,2) : \TV{t}.$
\item[8] Philippe is among the first four.
\item[] $\emph{has\_position}(\emph{philippe},1) : \TV{t}~\vee~\emph{has\_position}(\emph{philippe},2) : \TV{t}~\vee~$
\item[] $\emph{has\_position}(\emph{philippe},3) : \TV{t}~\vee~\emph{has\_position}(\emph{philippe},4) : \TV{t}.$
\item[9] Ignace has arrived neither in second nor third position.
\item[] $\emph{has\_position}(ignace,2) : \TV{f}.$
\item[] $\emph{has\_position}(ignace,3) : \TV{f}.$
\end{enumerate}

Encoding of Sentence 10 is based on Principle \ref{p_propagation_inconsistency}.
The first rule encodes that if the position of Pascal is known, then the position of Naren is 
the position of Pascal plus 3.
The second rule encodes that if the position of Naren is known, then the position of Pascal is 
the position of Naren minus 3.

\begin{enumerate}
\item[10] Pascal has beaten Naren by three positions.
\item[] $\emph{has\_position}(naren,P2) : \TV{t} \leftarrow \emph{has\_position}(pascal,P1) : \TV{t} ~\wedge~\emph{position}(P1) : \TV{t} ~\wedge~$
\item[] $\qquad \qquad \qquad \qquad~~~~ \emph{position}(P2) : \TV{t}~\wedge~ P2 = P1 + 3  ~\wedge~ \neg \emph{position}(P1) : \top ~\wedge~$
\item[] $\qquad \qquad \qquad \qquad~~~ \neg \emph{position}(P2) : \top ~\wedge~\neg \emph{has\_position}(pascal,P1) : \top.$
\item[] $\emph{has\_position}(pascal,P1) : \TV{t} \leftarrow \emph{has\_position}(naren,P2) : \TV{t} ~\wedge~\emph{position}(P1) : \TV{t} ~\wedge~$
\item[] $\qquad \qquad \qquad \qquad~~~ \emph{position}(P2) : \TV{t}~\wedge~ P2 = P1 + 3 ~\wedge~ \neg \emph{position}(P1) : \top ~\wedge~$
\item[] $\qquad \qquad \qquad \qquad~~~~ \neg \emph{position}(P2) : \top ~\wedge~\neg \emph{has\_position}(naren,P2) : \top.$
\end{enumerate}

Encoding of Sentence 11 is straightforward.
\begin{enumerate}
\item[11] Neither Ignace nor Dominique are in the fourth position.
\item[] $\emph{has\_position}(ignace,4) : \TV{f}.~~~\emph{has\_position}(dominique,4) : \TV{f}.$
\end{enumerate}

Next we define the consistency preference relation $<_{\cS}$,
where $\cS=(s_{1},s_{2},\cB_{\top})$, which implements Principle \ref{p_cpf}.
Here $s_1$ says that we hold greater confidence in the information about
the \emph{runner-} and \emph{position-}facts and therefore these types of facts are less likely to be inconsistent. The preference $s_2$ says
that next we are likely to be confident in the information given by the $\emph{has\_position}$- and $\emph{before}$-facts. The last component in $\cS$, $\cB_\top$, is the usual default that gives
preference to the most e-consistent models.
\begin{enumerate}
\item[] $s_{1}=\{runner(dominique) : \top,~~ runner(naren) : \top,~~ runner(ignace) : \top,$ \par $\qquad~~ runner(olivier) : \top,~~ runner(\emph{philippe}) : \top,~~ runner(pascal) : \top,$ \par $\qquad~~ \emph{position}(1) : \top,~~\ldots,~~\emph{position}(6) : \top\}.$
\item[]  $s_{2}=\{\emph{has\_position}(dominique,1) : \top,~~\ldots, ~~\emph{has\_position}(dominique,6) : \top,~~\ldots$ \par $\qquad ~~ \emph{has\_position}(pascal,1) : \top,~~\ldots, ~~\emph{has\_position}(pascal,6) : \top,$ \par $\qquad ~~\emph{before}(dominique,naren) : \top,~~\ldots,~~ \emph{before}(dominique,pascal) : \top,~~ \ldots$ \par $\qquad ~~\emph{before}(pascal,dominique) : \top,~~\ldots,~~\emph{before}(pascal,\emph{philippe}) : \top\}$.
\end{enumerate}

There is a single most consistency-preferred model as expected. The puzzle, as stated, has no inconsistent information so we show only the
$\emph{has\_position}$-facts.

$m = \{\emph{has\_position}(ignace,1) : \TV{t},~~\emph{has\_position}(dominique,2) : \TV{t},$

$\qquad ~~\emph{has\_position}(pascal,3) : \TV{t},~~\emph{has\_position}(\emph{philippe},4) : \TV{t},$

$\qquad ~~\emph{has\_position}(olivier,5) : \TV{t},~~\emph{has\_position}(naren,6) : \TV{t}\}.$

In addition, there are many \emph{before}-facts which encode the sequential order of arrival of the runners. 

Next, we illustrate several cases of injection of inconsistency into the
puzzle. Since complete models tend to be rather large, we show only the
$\emph{has\_position}$- and $\top$-predicates.

\begin{enumerate}
\item[12] Pascal arrives in the sixth position --- Variation 1
\item[] $\emph{has\_position}(pascal,6) : \TV{t}.$
\end{enumerate}

There is one model and it contains inconsistencies:

$m = \{\emph{has\_position}(ignace,1) : \TV{t},~~\emph{has\_position}(dominique,2) : \TV{t},$

$\qquad ~~\emph{has\_position}(naren,3) : \TV{t},~~\emph{has\_position}(\emph{philippe},4) : \TV{t},$

$\qquad ~~\emph{has\_position}(olivier,5) : \TV{t},~~\emph{has\_position}(pascal,6) : \TV{t},$

$\qquad ~~\emph{before}(pascal,naren) : \top,~~ \emph{before}(pascal,olivier) : \top\}.$

The \emph{before}-facts still represent the sequential order of arrival, so $\emph{before}(pascal,naren) : \TV{t}$ and $\emph{before}(pascal,olivier) : \TV{t}$ are true in the model. However, Sentence 12 contradicts these facts, so $\emph{before}(pascal,naren) : \TV{f}$ and $\emph{before}(pascal,olivier) : \TV{f}$ are also true. 
Therefore, these facts become inconsistent. Sentence 10 does not imply any inconsistencies beyond the ones already mentioned.

\begin{enumerate}
\item[13] Ignace arrives in the second position. --- Variation 2
\item[] $\emph{has\_position}(ignace,2) : \TV{t}.$
\end{enumerate}

There is one model:

$m = \{\emph{has\_position}(dominique,1) : \TV{t},~~\emph{has\_position}(ignace,2) : \top,$

$\qquad ~~\emph{has\_position}(pascal,3) : \TV{t},~~\emph{has\_position}(\emph{philippe},4) : \TV{t},$

$\qquad ~~\emph{has\_position}(olivier,5) : \TV{t},~~\emph{has\_position}(naren,6) : \TV{t}\}.$

Given that Sentence 13 contradicts Sentence 9, the fact $\emph{has\_position}(ignace,2)$ 
becomes inconsistent and the order of arrival of Dominique and Ignace are swapped.

\begin{enumerate}
\item[14] Philippe arrives before Dominique. --- Variation 3
\item[] $\emph{before}(\emph{philippe},dominique) : \TV{t}.$
\end{enumerate}

Now, we get 4 models:

$m_{1} = \{\emph{has\_position}(ignace,1) : \TV{t},~~\emph{has\_position}(dominique,2) : \TV{t},$

$\qquad ~~~\emph{has\_position}(pascal,3) : \TV{t},~~\emph{has\_position}(\emph{philippe},4) : \TV{t},$

$\qquad ~~~\emph{has\_position}(olivier,5) : \TV{t},~~\emph{has\_position}(naren,6) : \TV{t},$

$\qquad ~~~\emph{before}(\emph{philippe},dominique) : \top\}.$

$m_{2} = \{\emph{has\_position}(\emph{philippe},1) : \TV{t},~~\emph{has\_position}(dominique,2) : \TV{t},$

$\qquad ~~~\emph{has\_position}(pascal,3) : \TV{t},~~\emph{has\_position}(ignace,4) : \top,$

$\qquad ~~~\emph{has\_position}(olivier,5) : \TV{t},~~\emph{has\_position}(naren,6) : \TV{t}\}.$

$m_{3} = \{\emph{has\_position}(\emph{philippe},1) : \TV{t},~~\emph{has\_position}(dominique,2) : \TV{t},$

$\qquad ~~~\emph{has\_position}(pascal,3) : \TV{t},~~\emph{has\_position}(olivier,4) : \TV{t},$

$\qquad ~~~\emph{has\_position}(ignace,5) : \TV{t},~~\emph{has\_position}(naren,6) : \TV{t},$

$\qquad ~~~\emph{before}(ignace,olivier) : \top\}.$

$m_{4} = \{\emph{has\_position}(\emph{philippe},1) : \TV{t},~~\emph{has\_position}(dominique,2) : \TV{t},$

$\qquad ~~~\emph{has\_position}(naren,3) : \TV{t},~~\emph{has\_position}(pascal,4) : \TV{t},$

$\qquad ~~~\emph{has\_position}(ignace,5) : \TV{t},~~\emph{has\_position}(olivier,6) : \top,$

$\qquad ~~~\emph{before}(pascal,naren) : \top,~~\emph{before}(ignace,naren) : \top\}.$

The first model generates the same solution as the original puzzle except for the inconsistency where Philippe arrives before Dominique.
The second model places Ignace in the fourth position, which contradicts Sentence 11, so placing Ignace in the fourth position becomes inconsistent.
The third model places Olivier before Ignace, which is in contradiction with Sentence 6. All the other constraints are satisfied, so no more inconsistencies are derived.
The fourth model places Naren before Pascal and Ignace, which contradicts Sentence 6. Besides, it places Olivier the last, which contradicts Sentence 4.


\label{lastpage}
\end{document}